\def\eqref#1{equation~\ref{#1}}
\def\algref#1{algorithm~\ref{#1}}
\def\1{\bm{1}}
\def\vx{{\bm{x}}}
\def\vz{{\bm{z}}}
\DeclareMathAlphabet{\mathsfit}{\encodingdefault}{\sfdefault}{m}{sl}
\SetMathAlphabet{\mathsfit}{bold}{\encodingdefault}{\sfdefault}{bx}{n}
\newcommand{\E}{\mathbb{E}}
\DeclareMathOperator*{\argmax}{arg\,max}
\newtheorem{theorem}{Theorem}
\newtheorem{lemma}{Lemma}
\newtheorem{conjecture}{Conjecture}
\newtheorem{open-question}{Open Question}
\newcommand{\lemref}[1]{Lemma~\ref{#1}}
\newcommand{\thmref}[1]{Theorem~\ref{#1}}
\newcommand{\reals}{\mathbb{R}}
\newcommand{\mean}[2]{\mathbb{E}_{#1} \left[ #2 \right]}
\newcommand{\p}{^{\prime}}
\newcommand{\abs}[1]{\left \lvert #1 \right \rvert}
\newcommand{\todo}[1]{\textbf{TODO: #1}}
\newcommand{\mx}{\mathcal{X}}
\newcommand{\cd}{\mathcal{D}}
\newcommand{\ml}{\mathcal{L}}
\newcommand{\my}{\mathcal{Y}}
\newcommand{\ignore}[1]{}
\newcommand{\prob}[2]{\mathbb{P}_{#1}\left[#2\right]}
\newcommand{\supp}{\text{supp}}
\newcommand{\Gain}{\text{Gain}}
\newcommand{\Ber}{\text{Bernoulli}}
\newcommand{\IDt}{\text{ID3}}
\newcommand{\Ind}{\mathcal{I}}
\title{ID3 Learns Juntas for Smoothed Product Distributions}
\author{%
  Alon Brutzkus\thanks{  The Blavatnik School of Computer Science
  Tel Aviv University, Israel}
 \\
  \And
  Amit Daniely\thanks{
  School of Computer Science
  The Hebrew University, Israel}
   \And
  Eran Malach\thanks{
  School of Computer Science
  The Hebrew University, Israel}
}
\begin{document}
\maketitle

\begin{abstract}
In recent years, there are many attempts to understand popular heuristics.
An example of such a heuristic algorithm is the ID3 algorithm for learning
decision trees. This algorithm is commonly used in practice, but there are very few theoretical works studying its behavior. 
In this paper, we analyze the ID3 algorithm, when the target function is a $k$-Junta, a function that depends on $k$
out of $n$ variables of the input.
We prove that when $k = \log n$, the ID3 algorithm learns in polynomial time $k$-Juntas, in the smoothed analysis model of \cite{kalai2008decision}. That is, we show a learnability result  when the observed distribution is a ``noisy'' variant of the original distribution. 
\end{abstract}

\section{Introduction}

In recent years there has been a growing interest in analyzing machine learning
algorithms that are commonly used in practice. A primary example is the gradient-descent
algorithm for learning neural-networks, which achieves remarkable performance in practice
but has very little formal guarantees. A main approach in studying such algorithms
is proving that they are able to learn models that are known to be learnable.
For examples, it has been shown that SGD can learn neural-networks 
when the target function is linear,
or belongs to a certain kernel space \cite{brutzkus2017sgd,xie2016diverse,daniely2017sgd, du2018gradient,oymak2018overparameterized,allen2018learning,allen2018convergence,
arora2019fine, oymak_towards_2019, ma2019comparative, lee2019wide}.

In this paper we take a similar approach aiming to give theoretical guarantees
for the ID3 algorithm \cite{quinlan1986induction}
- a popular algorithm for learning decision trees.
We analyze the behavior of this algorithm when the target function is a $k$-Junta,
a function that depends only on $k$ bits from the input,
and the underlying distribution is a product distribution,
where the bits in the input examples are independent.
While we cannot guarantee that the ID3 algorithm learns under any such distribution,
as there are distributions which fail the algorithm,
we show that the algorithm can learn ``most'' such distributions.
That is, we show that for any product distribution and a $k$-Junta, the ID3 algorithm
learns the junta over a ``noisy'' variant of the original distribution.
Such a result is in the spirit of smoothed analysis \cite{spielman2004smoothed}, which is often used to give results
when a worst-case analysis is not satisfactory. 

\paragraph{Related Work} There are a number of works studying the learnability of
decision trees \cite{rivest1987learning, kushilevitz1993learning,blum1992rank,ehrenfeucht1989learning, bshouty2002using, bshouty2003proper, bshouty2005learning}. 
We next elaborate on papers that analyze decision trees under product distributions, as we do.
The work of \cite{kalai2008decision} gives learnability
results of decision trees for product distributions with smoothed analysis,
in a problem setting similar to ours. Their work analyzes an algorithm
that estimates the Fourier coefficients of the target function
in order to learn the decision tree. Another work \cite{o2007learning}
proves learnability of decision trees implementing monotone Boolean functions
under the uniform distribution.
Other algorithms for learning decision trees under the uniform distribution
are given in \cite{jackson2003learning,hazan2017hyperparameter},
again relying on Fourier analysis of the target function.
Another work \cite{chen2018beyond} gives an algorithm for learning stochastic
decision trees under the uniform distribution.
The work of \cite{blum1994weakly} gives negative results on learning polynomial
size decision trees under the uniform distribution in the statistical query
setting. 

While the above works study learnability of decision trees under various
distributional assumptions, they all consider algorithms that are very different
from algorithms used in practice.
Our work, on the contrary, gives guarantees for algorithms that enjoy 
empirical success. In the current literature there are very few works that analyze
such algorithms. Notably, the work of \cite{fiat2004decision} studies the class of
impurity-based algorithms, which contains the ID3 algorithm. This work shows that
unate functions, like linear threshold function or read-once DNF, are learnable
under the uniform distribution, using impurity-based algorithms.
Our work, on the other hand, considers a different choice of target functions
(Juntas), and shows learnability under ``most'' distributions,
and not only for a fixed distribution.
Another work that studies an algorithm used in practice \cite{kearns1999boosting}
shows that the CART and C4.5 algorithms can leverage weak approximation of the target
function, and thus can perform boosting. However, it is not clear whether such weak
approximation typically happens, and in what cases this result can be applied.
In contrast, our results apply for a concrete family of functions and distributions.

\subsection{Problem Setting}
\paragraph{The ID3 Algorithm} Let $\mathcal{X} = \{0,1\}^n$ be the domain set and let $\mathcal{Y} = \{0,1\}$
be the label set. We next describe the ID3 algorithm, following the presentation in \cite{shalev2014understanding}.
Define an impurity function $C$ to be any concave function $C:[0,1] \to \reals$,
satisfying that $C(x) = C(1-x)$ and $C(0) = C(1) = 0$.
Given an impurity function $C$, a sample $S \subset \mathcal{X} \times \mathcal{Y}$
and an index $i \in [n]$, we define the gain measure to be as follows:
\begin{align*}
\Gain(S,i) = &C(\prob{S}{y = 1}) \\
&-\left(\prob{S}{x_i = 1} C(\prob{S}{y=1 | x_i = 1})
+ \prob{S}{x_i = 0} C(\prob{S}{y=1 | x_i = 0})
\right)
\end{align*}
Given a sample $S \subseteq \mathcal{X} \times \mathcal{Y}$,
the ID3 algorithm generates a decision tree in a recursive manner. At
each step of the recursion, the algorithm chooses the feature $x_j$
to be assigned to a current node. The algorithm iterates over all the unused
features, and calculates the gain measure with respect to the examples
that reach the current node. Then, it chooses the feature that maximizes the gain.
This algorithm is described formally in \algref{alg:id3}.
The output of the algorithm is given by the initial call to $\IDt(S,[n])$.

\begin{algorithm}
   \caption{$\IDt(S,A)$}\label{alg:dc_full}
   \label{alg:id3}
\begin{algorithmic}
  \STATE \textbf{input}: 
\begin{ALC@g}
  \STATE Training set $S \subset \mathcal{X} \times \mathcal{Y}$
  \STATE Feature subset $A \subseteq [n]$
\end{ALC@g}
   \IF {all examples in $S$ have the same label $y \in \mathcal{Y}$}
   \STATE return a leaf with label $y$
   \ELSE
   \STATE Let $j=\argmax_{i \in A} \Gain(S,i)$
   \STATE Let $T_1$ be the tree returned by $\IDt(\{(\vx,y) \in S ~:~ x_j = 1\},
   A \setminus \{j\})$
   \STATE Let $T_2$ be the tree returned by $\IDt(\{(\vx,y) \in S ~:~ x_j = 0\},
   A \setminus \{j\})$
   \STATE Return the a tree with root $x_j$, whose left and right sub-trees
   are $T_2$ and $T_1$
   \ENDIF
\end{algorithmic}
\end{algorithm}

\paragraph{Learning Juntas}
A $k$-Junta is a function $f:\{0,1\}^n\to\{0,1\}$ that depends on $k$ coordinates.
Namely, there is a set $J= \{ i_1< i_2 < \ldots < i_k \} \subset [n]$ and a function
$\tilde f:\{0,1\}^k\to\{0,1\}$ such that 
$f(\vx) = \tilde f(x_{i_1},\ldots,x_{i_k})$. In this case, we will say that $f$ is supported in $J$.
Throughout the paper, we assume that the examples are sampled from a 
\textbf{product} distribution $\mathcal{D}$ over $\mathcal{X} \times \mathcal{Y}$, that is realizable by a $\log(n)$-Junta.
Namely, we assume that for $(\vx, y) \sim \mathcal{D}$,
$\vx \sim \prod_{i=1}^n \Ber(p_i)$ for some $p_1, \dots, p_n \in [0,1]$, and $y=f(\vx)$ for some $\log(n)$-Junta $f$.
 The main goal of this paper is to show that for ``most" product distributions, the ID3 algorithm succeeds to learn $\log(n)$-Juntas in polynomial time. Namely, it will return a tree $T$ whose generalization error, $\ml_\cd(T):=\Pr_{(\vx,y)\sim\cd}\left(T(\vx)\ne y\right)$, is small (in fact, zero).
We note that the sample complexity of learning $\omega\left(\log(n)\right)$-Juntas is super polynomial, hence, $\log(n)$-Juntas is the best that we can hope to learn in polynomial time.

\subsection{Results}
We will show two positive results for learning $\log(n)$-Juntas. The first establishes learnability of parities, while the second is about learnability of general Juntas. Thruought, we assume that the impurity function $C$ is strongly concave and Lipschitz.

\paragraph{Learning Parities}
A {\em $k$-parity} is a function of the form $\chi_J(\vx) = \begin{cases}1 & \sum_{i\in J}x_i\text{ is odd}\\0 & \sum_{i\in J}x_i\text{ is even} \end{cases}$, where $J\subset [n]$ is a set of $k$ indices. Note that any $k$-parity is a $k$-Junta.
We first consider leranability of $\log(n)$-parities by the ID3 algorithm\footnote{As opposed to general $k$-Juntas, $k$-parities with any $k$ are learnable in polynomial time. Yet, in the context of decision tree algorithms, we cannot hope to learn $k$-parities with $k=\omega(\log(n))$. Indeed, such parities cannot be computed, or even approximated, by a poly-sized tree.}.

Learning parity functions is a classical problem in machine learning,
for which there exists an efficient algorithm\cite{feldman2006new,feldman2009agnostic}. Still, parities often serve as a hard benchmark, as many common algorithms cannot learn these functions \cite{blum2003noise,shalev2017failures}. 
In the case of the ID3 algorithm, when the underlying distribution
is uniform (i.e, when $p_i = \frac{1}{2}$ for all $i \in [n]$), the algorithm
fails to learn parity functions \cite{kearns1996boosting}. 
We show that the case of the uniform distribution is in some sense unique.
That is, we show that for every distribution that is not ``too close''
to the uniform distribution, the ID3 algorithm succeeds to learn any such parity function.
To this end, we say that $\cd$ is $(\alpha,c)$-distributuion if $\left|p_i-\frac{1}{2}\right|>c$ and $p_i\in (\alpha,1-\alpha)$ for any $i\in [n]$.

\begin{theorem}\label{thm:parity_main}
Fix $\alpha,c>0$. There is a polynomial\footnote{The polynomial $p$ depends on $\alpha,c$ and the impurity function $C$. See theorem \ref{thm:parities} for a detailed dependency.} $p$ for which the following holds. Suppose that the ID3 algorithm runs on $p\left(n,\log\left(\frac{1}{\delta}\right)\right)$ examples from an $(\alpha,c)$-distribution $\cd$ that is realized by a $\log(n)$-parity. Then, w.p. $\ge 1-\delta$, ID3 will output a tree $T$ with $\ml_\cd(T)=0$.
\end{theorem}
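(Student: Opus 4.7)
The strategy is to induct on the depth of the recursion and show that at every internal node the ID3 algorithm splits on a variable inside the junta set $J$; after at most $k=\log n$ relevant splits along any root-to-leaf path, the restricted target is constant and the recursion returns a leaf with the correct label, giving $\ml_\cd(T)=0$.

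The core population-level computation goes as follows. Conditioning a product distribution on any assignment to a subset of coordinates preserves the product structure and the biases $p_j$ of the remaining coordinates, and the restricted target is $\pm\chi_{J'}$ for some $J'\subseteq J$. For an irrelevant $i\notin J'$, the variable $x_i$ is independent of the target at the node, so the two conditional label probabilities entering the gain formula both equal $\Pr(y=1)$ and the true gain is exactly $0$. For a relevant $i\in J'$, writing $\chi_{J'}(\vx)=x_i\oplus \chi_{J'\setminus\{i\}}(\vx)$ and $q:=\Pr(\chi_{J'\setminus\{i\}}(\vx)=1)$, the product-distribution identity gives $|q-\tfrac12|=\tfrac12\prod_{j\in J'\setminus\{i\}}|1-2p_j|\ge \tfrac12(2c)^{|J'|-1}$. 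Since $\Pr(y=1)=p_i(1-q)+(1-p_i)q$ is a convex combination of $1-q$ and $q$, strong concavity of $C$ with parameter $\mu$ together with $p_i\in(\alpha,1-\alpha)$ yields true gain $\ge 2\mu\alpha(1-\alpha)(q-\tfrac12)^2 \ge \tfrac{\mu\alpha(1-\alpha)}{2}(2c)^{2(|J'|-1)}$, which for $|J'|\le\log n$ is $\Omega(n^{-\gamma})$ for some $\gamma=\gamma(c,\mu)$.

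Next I would translate this population gap into a uniform empirical gap. For any node of depth $d\le\log n$, the probability that an example reaches it is at least $\alpha^d\ge\alpha^{\log n}=n^{-\log(1/\alpha)}$, so a Chernoff bound shows that the number $N_v$ of samples reaching the node is $\Omega(m\,n^{-\log(1/\alpha)})$ with high probability, where $m$ is the total sample size. Hoeffding's inequality applied to the three empirical probabilities that enter the gain, combined with the Lipschitz constant $L$ of $C$, bounds the deviation of the empirical gain from the true gain at $v$ by $O(L/\sqrt{N_v\alpha})$. Choosing $m=p(n,\log(1/\delta))$ polynomial in $n$ and $\log(1/\delta)$ to make this error less than half the population gap, and union-bounding over the at most $O(n)$ reachable nodes and the $n$ features at each one, guarantees with probability $\ge 1-\delta$ that the empirical gain ordering coincides with the population ordering everywhere. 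Induction on depth then forces ID3 to pick some $i\in J'$ at every internal node, the target becomes constant after at most $\log n$ such splits, and the returned tree has zero error.

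The main technical obstacle is balancing the two competing $n^{O(1)}$ factors: the population gain gap shrinks like $n^{-\gamma}$ while the per-node sample-reach probability shrinks like $n^{-\log(1/\alpha)}$, so the sample-complexity polynomial $p$ must simultaneously absorb both and still leave a union-bound budget over the $O(n^2)$ (node, feature) pairs. A secondary subtlety is that the number of examples reaching a node is itself random rather than a fixed count of iid draws from the conditional distribution, which is handled by the Chernoff-then-Hoeffding two-step argument outlined above.
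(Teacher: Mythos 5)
Your proposal is correct and follows essentially the same route as the paper: lower-bound the population gain of coordinates in $J$ via strong concavity of $C$, upper-bound that of coordinates outside $J$ via independence plus Lipschitzness, then translate to empirical gains by concentration and union-bound over all reachable nodes (vectors $w$ supported on $J$) and features. The only difference is cosmetic—the paper routes both gain bounds through the auxiliary covariance quantity $\Ind(\cd_w,i)=\E[y]\E[x_i]-\E[yx_i]$ (Lemmas~\ref{lem:empirical_gain_upper}--\ref{lem:empirical_gain_lower_bound}, reusable for the general-Junta case) and then lower-bounds $\Ind$ for parities in Lemma~\ref{lem:parity_I_lower_bound}, whereas you compute the parity gain gap directly from the identity $\bigl|\Pr(\chi_{J'\setminus\{i\}}=1)-\tfrac12\bigr|=\tfrac12\prod_{j\in J'\setminus\{i\}}|1-2p_j|$; the resulting $(2c)^{\Theta(k)}$ gap and the Chernoff-then-Hoeffding concentration step coincide with the paper's.
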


\paragraph{Smoothed Analysis of Learning General Juntas}
For general Juntas, instead of standard worst-case analysis, where we require that the algorithm succeeds
to learn \textbf{any} distribution, we will show that the algorithm 
learns \textbf{most} distributions. Namely, for every fixed distribution,
we show that the algorithm succeeds to learn, with high probability, a
``noisy'' version of this distribution.
Formally, a {\em smoothened $(\alpha,c)$-distribution $\cd$} is a {\em random} distribution where  
$p_i = \hat{p}_i + \Delta_i$ for some 
$\hat{p}_i\in \left(\alpha+c, 1-\alpha-c\right)$ and $\Delta_i \sim Uni([-c,c])$.

\begin{theorem}\label{thm:junta_main}
Fix $\alpha,c>0$. There is a polynomial\footnote{The polynomial $p$ again depends on $\alpha,c$ and the impurity function $C$. See theorem \ref{thm:junta} for a detailed dependency.} $p$ for which the following holds. Suppose that the ID3 algorithm runs on $p\left(n,\frac{1}{\delta}\right)$ examples from a smoothened $(\alpha,c)$-distribution $\cd$ that is realized by a $\log(n)$-junta. Then, w.p. $\ge 1-\delta$, ID3 will output a tree $T$ with $\ml_\cd(T)=0$.
\end{theorem}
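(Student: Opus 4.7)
The plan is to prove that, with high probability jointly over the smoothening $\Delta$ and the sample, every internal node of the tree produced by ID3 is labelled by a coordinate in $J$. Since $|J|=\log n$, this forces the output tree to have depth at most $\log n$, be consistent with $f$ on $\mathrm{supp}(\cd)$, and therefore attain $\ml_\cd(T)=0$. I split the argument into three pieces combined by a union bound: a population-level gap between the gain of a relevant coordinate and that of any irrelevant one, an anti-concentration statement that preserves this gap under the random smoothening, and standard uniform convergence of empirical gains.

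Fix a node $v$ of the recursion with root-to-$v$ partial assignment $\rho$, and let $f_v$ be the restriction of $f$ by $\rho$, supported on $J_v := J \setminus \mathrm{supp}(\rho)$. If $f_v$ is constant, then (with high probability) the sample at $v$ is pure and ID3 outputs a leaf. Otherwise, for every $j \notin J_v$ the population gain $\Gain_{\cd_v}(j)$ is exactly zero, whereas for $i \in J_v$ strong concavity of $C$ gives
\[
\Gain_{\cd_v}(i) \;\ge\; \tfrac{\sigma}{2}\,\alpha(1-\alpha)\,Q_{v,i}(\Delta)^{2},\qquad
Q_{v,i}(\Delta) \;:=\; \mathbb{E}_{\cd_v}[f_v\mid x_i=1] - \mathbb{E}_{\cd_v}[f_v\mid x_i=0],
\]
where $Q_{v,i}$ is a multilinear polynomial of degree at most $\log n - 1$ in $(\Delta_j)_{j \in J_v \setminus \{i\}}$. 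The non-zero $\{-1,0,1\}$-valued function $f_v|_{x_i=1}-f_v|_{x_i=0}$ has integer multilinear coefficients with a maximal-support monomial whose coefficient $\beta_{T^{*}}$ satisfies $|\beta_{T^{*}}|\ge 1$, and this leading coefficient is preserved by the affine substitution $p_j = \hat p_j + \Delta_j$; hence $\|Q_{v,i}\|_{L^{2}}^{2} \ge (c^{2}/3)^{|T^{*}|} \ge (c^{2}/3)^{\log n}$, which is $\mathrm{poly}(1/n)$ for constant $c$.

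Applying a polynomial anti-concentration inequality to $Q_{v,i}$ should then give $\Pr_{\Delta}\!\bigl[|Q_{v,i}(\Delta)|\le\epsilon\bigr] \le \delta/(2n\log n)$ for some $\epsilon = \mathrm{poly}(1/n,\delta)$. Union-bounding over the $\le 2^{\log n}=n$ possible partial assignments $\rho$ and the $\le \log n$ relevant coordinates per node yields, with probability $\ge 1-\delta/2$ over $\Delta$, that every relevant coordinate at every node has population gain at least $\gamma = \Omega(\sigma\alpha(1-\alpha)\epsilon^{2}) = \mathrm{poly}(1/n,\delta)$. Each node is reached under $\cd$ with probability at least $\alpha^{\log n} = \mathrm{poly}(1/n)$, so with $N = p(n,1/\delta)$ samples for an appropriate polynomial $p$, Chernoff together with the Lipschitz continuity of $C$ implies that every empirical gain $\Gain(S_{v},\cdot)$ is within $\gamma/3$ of its population value, uniformly over all $\le 2^{\log n}$ nodes and all $n$ coordinates, with probability $\ge 1-\delta/2$. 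On the intersection of these two events ID3 selects a coordinate in $J_{v}$ at every internal node, concluding the proof.

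The main obstacle is the anti-concentration step. The standard Carbery--Wright bound has rate $(\epsilon/\|Q_{v,i}\|_{L^{2}})^{1/d}$, which for $d=\log n$ would only allow $\epsilon$ quasi-polynomially small and hence blow up the sample complexity beyond $\mathrm{poly}(n,1/\delta)$. Obtaining a polynomial-rate bound on $\Pr[|Q_{v,i}|\le\epsilon]$ requires exploiting the specific structure of the influence polynomial of a Boolean junta---its multilinearity, bounded degree, and integer leading coefficient---for instance via a variable-by-variable peeling argument that at each step uses both the uniform density $1/(2c)$ of $\Delta_{j}$ and the integer lower bound on the partial leading coefficient, so that the recursion accumulates only constant-factor losses per layer. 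With that refinement in hand, the remaining pieces---the strong-concavity expression for the gain, the uniform convergence of empirical conditional marginals via Hoeffding, and the depth/size bound on the output tree---are comparatively routine.
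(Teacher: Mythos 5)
Your architecture coincides with the paper's: reduce correctness to the claim that every node whose path lies in $J$ either becomes a leaf or splits on a coordinate of $J$; observe that irrelevant coordinates have zero population correlation with the label so their empirical gain is $O(\gamma\epsilon)$ by Lipschitzness; use strong concavity to convert a lower bound on the influence quantity $\Ind(\cd_w,i)$ into a lower bound $\Omega(\beta\epsilon^2)$ on the gain; and pay an $\alpha^{-2k}=\poly(n)$ factor in the sample size for conditioning on $\mx_w$. The problem is that the one step you yourself flag as ``the main obstacle''---a polynomial-rate anti-concentration bound for the degree-$(k-1)$ multilinear polynomial $Q_{v,i}(\Delta)$---is precisely the step you do not carry out, and it is the entire content of the smoothed analysis. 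The paper closes it by invoking Lemma 3 of \cite{kalai2008decision}: if $G$ is multilinear of degree $d$ and some degree-$d$ coefficient has magnitude at least $1$, then $\Pr_{\xi\sim \mathrm{Uni}([-1,1]^{m})}\left[|G(\xi)|\le\epsilon\right]\le 2^{d}\sqrt{\epsilon}$. The rate in $\epsilon$ is $\sqrt{\epsilon}$ \emph{independent of the degree}; the only degree dependence is the benign factor $2^{d}\le\poly(n)$, and rescaling $g_0$ so that its leading coefficient is at least $1$ costs a further $(2/c)^{k}=\poly(n)$ factor, so that $\epsilon=\delta^{2}(c/2)^{2k}$ suffices. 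Your ``variable-by-variable peeling'' sketch is indeed the right idea (and is essentially how that lemma is proved), but as written your proposal asserts the existence of such a bound rather than establishing it, so the argument is incomplete exactly at its crux.

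A second, fixable error: you claim that ``every relevant coordinate at every node has population gain at least $\gamma$.'' This is false. A restriction $f_v$ can be non-constant yet independent of some $i\in J_v$ (e.g.\ $f=x_1\oplus(x_2\wedge x_3)$ restricted by $x_3=0$ no longer depends on $x_2$), in which case $Q_{v,i}\equiv 0$ and no anti-concentration argument can help. What you need, and what is provable, is that \emph{some} $i\in J_v$ has large influence; the paper obtains this by choosing $i$ inside a nonempty set $I_0$ with nonzero Fourier coefficient of $f_v$, and one such coordinate beating all $j\notin J$ already forces the argmax into $J$. Relatedly, the union bound over the smoothening $\Delta$ must cover all \emph{potential} partial assignments $w\in\{0,1,*\}^{J}$ (there are $3^{k}=n^{\log_2 3}$ of them, not $2^{k}$), since the event on $\Delta$ must hold before the realized tree is known; this is still polynomial but should be stated correctly.
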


\subsection{Open Question}
We now turn to discussing possible open questions and future directions arising from
this work.
Our main result applies for the case where the target function is a $k$-Junta,
which can be implemented by a tree of depth $k = \log n$.
An immediate open question is whether a similar learnability result can be shown
for general trees of depth $\log n$. We conjecture that this is indeed the case.

\begin{conjecture}\label{thm:junta_main}
Fix $\alpha,c>0$. There is a polynomial $p$ for which the following holds. Suppose that the ID3 algorithm runs on $p\left(n,\frac{1}{\delta},\frac{1}{\epsilon}\right)$ examples from a smoothened $(\alpha,c)$-distribution $\cd$ that is realized by a $\log(n)$-depth-tree. Then, w.p. $\ge 1-\delta$, ID3 will output a tree $T$ with $\ml_\cd(T)\le \epsilon$.
\end{conjecture}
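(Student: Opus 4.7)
The plan is to extend the proof of \thmref{thm:junta_main} from juntas to general $\log(n)$-depth decision-tree targets by treating the subproblem at each node of ID3's recursion as an instance of the junta learning problem. The key structural observation is that at any node $v$ reached by ID3 the target $f$ restricted to the partial assignment along the root-to-$v$ path is itself a decision tree of depth at most $\log(n)$, hence a junta in at most $\log(n)$ of the remaining coordinates. Moreover, conditioning a smoothened $(\alpha,c)$-product distribution on fixing a subset of its coordinates yields a smoothened $(\alpha,c)$-product distribution on the remaining coordinates, so the hypotheses of \thmref{thm:junta_main} apply locally at every node of ID3's execution.

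I would first distill from the proof of \thmref{thm:junta_main} a quantitative \emph{gain lower bound}: with high probability over the smoothing, for every non-constant $\log(n)$-junta some variable on which it depends has population gain at least $\gamma=\poly(\alpha,c,1/n)$ under its smoothened product distribution. Applied at every node $v$ of ID3's run to the restricted target and the restricted distribution, this implies that whenever the restricted target is non-constant some locally relevant variable has population gain $\ge \gamma$. Combined with Hoeffding concentration of the empirical gain estimates, and the fact that a variable on which the restricted target does not depend has population gain \emph{exactly} zero under a product distribution, a union bound over nodes shows that with probability $\ge 1-\delta$ ID3 splits on a locally relevant variable at every node that receives enough samples.

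The main new difficulty, absent from \thmref{thm:junta_main}, is bounding the size of ID3's tree. In the junta case the depth is automatically at most $k=\log n$ because there are only $k$ globally relevant variables; here the locally relevant set varies across subtrees and ID3 can in principle build a tree much deeper than $\log n$. My approach is to track the impurity potential $\Phi = \sum_{v \in \text{leaves}} \Pr[v]\,C(\Pr[y=1\mid v])$, which drops by exactly $\Pr[v]\cdot\Gain(v,i)\ge \gamma\eta$ at every split of a node whose probability is $\ge \eta$. Because $\Phi$ starts at a constant and stays non-negative, the number of such ``heavy'' splits is at most $\poly(n,1/\epsilon)$ once $\eta$ is chosen to be a small polynomial in $\epsilon$, $\gamma$ and $1/n$; concentrating the gains at every heavy node then costs only $m=\poly(n,1/\epsilon,1/\delta)$ samples, and strong concavity of $C$ converts the residual impurity at heavy leaves into an $O(\epsilon)$ generalization error.

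The hardest remaining step is controlling the contribution of the ``light'' part of the tree (nodes whose probability drops below $\eta$, where sample complexity is insufficient to guarantee a correct split). The naive bound---multiplying the number of heavy-to-light transition nodes by $\eta$---gives a total light-mass of order $1/\gamma$, which is not $O(\epsilon)$ by itself. Closing the proof appears to require a finer, path-wise version of the gain lemma, establishing that the conditional impurity along any root-to-leaf path decays geometrically rather than merely additively, so that every light leaf is automatically near-label-pure. This path-wise refinement is the main technical hurdle; once in place, combining it with the preceding ingredients yields the error bound $\ml_\cd(T)\le \epsilon$ claimed in the conjecture.
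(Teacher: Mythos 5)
First, note that the statement you are proving is stated in the paper as a \emph{conjecture}: the authors explicitly leave it open, and their proven results cover only the strictly smaller class of $\log(n)$-juntas. So there is no proof in the paper to compare against; the question is whether your sketch closes the gap. It does not, for two reasons, one of which you acknowledge and one of which is a genuine error. The error is your ``key structural observation'' that a depth-$\log(n)$ decision tree (or any restriction of it) is ``a junta in at most $\log(n)$ of the remaining coordinates.'' A binary tree of depth $d$ can query up to $2^d-1$ distinct variables, so a depth-$\log(n)$ tree is in general a junta on up to $n-1$ coordinates, not $\log(n)$ of them. This is precisely why the paper treats the conjecture as harder than Theorem~\ref{thm:junta}. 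Consequently the local reduction to the junta theorem is unsound at every node, starting from the root, and the ``distilled gain lower bound'' $\gamma=\poly(\alpha,c,1/n)$ does not follow from the paper's machinery: Lemma~\ref{lem:juntas_I_lower_bound} gives a bound of order $\alpha^2\delta^2(c/2)^{2k}$ where $k$ is the number of relevant variables of the (restricted) target, and with $k$ as large as $n-1$ this is exponentially small. The Fourier normalization $|\alpha_I|\ge 2^{-k}$ and the factor $(2/c)^{k}$ in the anti-concentration step both degrade exponentially in the number of relevant variables, not in the depth. A correct proof would need a gain bound that exploits depth rather than support size, and no such bound is supplied.

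Separately, even granting a uniform gain bound $\gamma$, your own write-up concedes that the argument is incomplete: the Kearns--Mansour-style potential $\Phi$ controls the number of heavy splits but leaves a total ``light'' mass of order $1/\gamma$ unaccounted for, and you defer the needed path-wise geometric decay of impurity to future work. Two further issues would also need attention in any repair: (i) the smoothing is drawn once globally, so applying a per-node, probability-$(1-\delta)$ statement over $\Delta$ at every node of an adaptively built tree requires a union bound over a set of restrictions that is not a priori of polynomial size (in the junta case the paper union-bounds over the $3^k=\poly(n)$ restrictions supported in $J$; here the reachable restrictions are not confined to a small set); and (ii) the claim that irrelevant variables of the \emph{restricted} target have exactly zero population gain is correct for product distributions, but the empirical concentration at a node of probability $\eta$ costs $\eta^{-2}$ samples, so $\eta$ must remain inverse-polynomial, which feeds back into the unresolved light-mass problem. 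In short, the reduction as stated fails at its first step, and the remaining steps are a plausible outline of a strategy rather than a proof.
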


As we previously mentioned, our work could be viewed in a broader context
of understanding heuristic learning algorithms that enjoy empirical success.
In this field of research, a main challenge of the machine learning
community is to understand the behavior of neural-networks learned with gradient-based
algorithms. While our analysis is focused on proving results for the ID3 algorithm,
we believe that similar techniques could be used to show similar results for
learning neural-networks with gradient-descent.
Specifically, we raise the following interesting question:
\begin{open-question}
Can gradient-descent learn neural-networks when the target function is a $k$-Junta,
in the smoothed analysis setting?
\end{open-question}

\section{Proofs}

\subsection{General Approach}
Throughout, we assume that $\cd$ is a distribution that is realized by a Junta $f$, supported in $J\subset [n]$, with $|J| = k$. We assume w.l.o.g. that $J = [k]$.

To prove our result, we will show that w.h.p., the algorithm chooses only variables from $[k]$, and furthermore, any root-to-leaf path will contain all the variables from $[k]$. 
In this case, the resulting tree will have zero generalization error. To formalize this, we will use the following notation.
We define the support of a vector $w\in \{*,0,1\}^n$ as
\[
\supp(w) = \{i\in [n] : w_i\ne *\}
\]
and let
\[
\mx_w = \{x\in\mx : x_i = w_i\text{ for any }i \in \supp(w)\}
\]
For a sample $S \subseteq \mathcal{X} \times \mathcal{Y}$, we denote
\[
S_{w} = \{(x,y) \in S ~:~  x\in \mx_w\}
\]
Finally, for a distribution $\cd$ we denote $\cd_w = \cd |_{x\in \mx_w}$ 
\begin{lemma}\label{lem:basic}
Suppose that the sample $S$ is realized by $f$. Assume that for any $w\in \{0,1,*\}^n$ with
$\supp(w)\subset J$ we have $S_w \ne \emptyset$ and either of the following holds:
\begin{itemize}
\item All examples in $S_w$ have the same label.
\item For all $i\in J\setminus \supp(w)$ and $j\in [n]\setminus J$ we have
$\Gain(S_w,i) > \Gain(S_w,j)$
\end{itemize}
Then, the ID3 algorithm will build a tree with zero loss on $\cd$.
\end{lemma}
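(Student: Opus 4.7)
The plan is to prove by induction on $|\supp(w)|$ that every recursive call $\IDt(S_w,[n]\setminus\supp(w))$ made by the algorithm occurs at some $w\in\{0,1,*\}^n$ with $\supp(w)\subseteq J$, and that the subtree returned by this call predicts $f(x)$ correctly for every $x\in\mx_w$. The base case is the outermost invocation, for which $w=*^n$, $\supp(w)=\emptyset\subseteq J$, and $\mx_w=\mx$.

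For the inductive step, I would fix a call at some $w$ with $\supp(w)\subseteq J$ and use the two alternatives supplied by the hypothesis. In the first case, where every example in $S_w$ carries a common label $y_0$, ID3 returns a leaf labeled $y_0$; to verify correctness I would argue that for any $x\in\mx_w$, the extension $w^\star$ of $w$ that fixes the remaining $J$-coordinates to match $x$ satisfies $\supp(w^\star)=J$, so by hypothesis $S_{w^\star}\ne\emptyset$. Any sample in $S_{w^\star}\subseteq S_w$ has its label equal to $y_0$ by purity, and also equal to $f(x)$ since the example agrees with $x$ on all of $J$ and $f$ depends only on $J$; hence $f(x)=y_0$. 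In the second case, the strict gain comparison forces $\argmax_{i\in[n]\setminus\supp(w)}\Gain(S_w,i)$ to lie in $J\setminus\supp(w)$, so ID3 splits on some $j\in J$, producing two children with support $\supp(w)\cup\{j\}\subseteq J$. The inductive hypothesis applies to each child, and the resulting split-node is correct on $\mx_w$ since $\mx_w$ partitions into the two sub-cells determined by $x_j=0$ and $x_j=1$.

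Because each split strictly enlarges $|\supp(w)|$ while keeping $\supp(w)\subseteq J$, the recursion terminates after at most $k=|J|$ levels, so this process does define a finite tree. Applying the invariant at the root yields a tree $T$ that computes $f$ on all of $\mx$, which gives $\ml_\cd(T)=0$.

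The only delicate point is the leaf-correctness step in the first case; the rest is routine bookkeeping of the recursion. The essential ingredient there is that the hypothesis $S_w\ne\emptyset$ is applied even to $w$ with $\supp(w)=J$, so the sample contains a witness for every full assignment on $J$, which is exactly what promotes label-purity of $S_w$ into constancy of $f$ on the whole cell $\mx_w$.
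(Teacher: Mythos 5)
Your proof is correct and follows essentially the same logic as the paper's: an invariant that every node visited corresponds to some $w$ with $\supp(w)\subseteq J$ (so only $J$-variables are ever chosen as splits), combined with the observation that $S_w\ne\emptyset$ for all $w$ with $\supp(w)=J$ promotes training-set purity at a leaf into agreement with $f$ on the whole cell. The only difference is cosmetic: you verify leaf correctness directly on each cell $\mx_{w}$ via the extension $w^\star$, while the paper first notes that the tree correctly labels every training example and then invokes the witness in $S_{w(x')}$ for each $x'\in\{0,1\}^k$; the two arguments are the same idea reorganized.
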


Not surprisingly, the gain of coordinates outside of $J$ is always small. This is formalized in the following lemma.
\begin{lemma}
\label{lem:empirical_gain_upper}
Assume that $C$ is $\gamma$-Lipschitz.
Fix $w\in \{0,1,*\}^n$, with $|\supp(w)| \le k$, $j \in [n] \setminus J$ and $\epsilon, \delta \in (0,1)$.
Assume we sample $S \sim \mathcal{D}^m$ with
$m \gtrsim  \epsilon^{-2}\alpha^{-2k} \log(\frac{1}{\delta})$.
Then with probability at least $1-\delta$ we have $S_w \ne \emptyset$ and:
\[
Gain(S_w,j) < 2 \gamma \epsilon
\]
\end{lemma}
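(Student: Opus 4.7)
The plan rests on a single observation: since $\cd$ is a product distribution and $y=f(\vx)$ depends only on coordinates in $J$, the coordinate $x_j$ for $j\notin J$ is independent of $y$ under $\cd$. Crucially, this independence is preserved after conditioning on $x_i=w_i$ for $i\in\supp(w)\subset J$, because $\cd_w$ is still a product distribution on the remaining coordinates. Consequently $\Pr_{\cd_w}[y=1\mid x_j=b]=\Pr_{\cd_w}[y=1]$ for each $b\in\{0,1\}$, so substituting population probabilities into the gain formula telescopes to exactly zero (the two terms $\Pr_{\cd_w}[x_j=1]\,C(q)$ and $\Pr_{\cd_w}[x_j=0]\,C(q)$ add up to $C(q)$, cancelling the first term). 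Hence the ``population gain'' of $j$ on $\cd_w$ vanishes regardless of the value of $p_j$.

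The proof then reduces to showing that the empirical gain concentrates around this zero value. First I would lower-bound $|S_w|$: since each $p_i\in(\alpha,1-\alpha)$, a draw from $\cd$ lands in $\mx_w$ with probability at least $\alpha^{|\supp(w)|}\ge \alpha^k$, so a Chernoff bound gives $|S_w|\gtrsim m\alpha^k$ (and in particular $S_w\ne\emptyset$) with probability $1-O(\delta)$. Conditioned on landing in $\mx_w$, the elements of $S_w$ are i.i.d. from $\cd_w$. I would then apply Hoeffding's inequality to each empirical quantity that enters $\Gain(S_w,j)$, namely $\Pr_{S_w}[y=1]$, $\Pr_{S_w}[x_j=1]$, and $\Pr_{S_w}[y=1\mid x_j=b]$ for $b\in\{0,1\}$, to show each is within $\epsilon$ of its population counterpart. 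For the conditional probabilities one needs a secondary lower bound on the sizes of $\{(x,y)\in S_w:x_j=b\}$; since $p_j\in(\alpha,1-\alpha)$ these subsamples have expected size $\gtrsim m\alpha^{k+1}$, again by Chernoff, which is what forces the extra power of $\alpha$ in the sample complexity.

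Finally, the $\gamma$-Lipschitzness of $C$ lets me replace each $C(\hat{p})$ in the gain formula by $C(p)$ at an additive cost of $\gamma\epsilon$ per term. Writing $\Gain(S_w,j)=C(\hat q)-\hat p\,C(\hat q_1)-(1-\hat p)\,C(\hat q_0)$, using $|C(\hat q_b)-C(q)|\le \gamma\epsilon$ and $|C(\hat q)-C(q)|\le \gamma\epsilon$ to pass to population values, and invoking the vanishing-gain identity from the first paragraph, yields $|\Gain(S_w,j)|\le 2\gamma\epsilon$ after collecting the Lipschitz errors. A union bound over the (constantly many) Hoeffding events is absorbed into the $\delta$ budget.

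The only real care-point, and the reason the stated sample complexity carries $\alpha^{-2k}$ rather than $\alpha^{-k}$, is the nested conditioning: the conditional empirical probabilities $\Pr_{S_w}[y=1\mid x_j=b]$ are ratios whose concentration must be analysed on the doubly-restricted subsamples, whose expected size is roughly $m\alpha^{k+1}$. Handling this bookkeeping while keeping the exponents under control is the only delicate part of the argument; the substance of the lemma is the independence observation that makes the population gain identically zero, after which Hoeffding plus Lipschitzness is standard.
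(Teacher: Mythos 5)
Your observation that the population gain of any $j\notin J$ vanishes under $\cd_w$ is the right starting point, and your conclusion is correct, but the concentration argument follows a genuinely different route from the paper's. The paper concentrates a single scalar, $\Ind(S_w,j)=\mean{S_w}{y}\mean{S_w}{x_j}-\mean{S_w}{yx_j}$, via \lemref{lem:empirical_correlation}, and then expresses the conditional deviations as $\abs{\prob{S_w}{y=1|x_j=1}-\prob{S_w}{y=1}}=\abs{\Ind(S_w,j)}/\bar p_j$ and $\abs{\prob{S_w}{y=1|x_j=0}-\prob{S_w}{y=1}}=\abs{\Ind(S_w,j)}/(1-\bar p_j)$, where $\bar p_j = \prob{S_w}{x_j=1}$. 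When these are passed through the $\gamma$-Lipschitz property into the gain formula, the factors $\bar p_j$ and $1-\bar p_j$ cancel exactly against the weights multiplying $C(\cdot)$ in the definition of $\Gain$, so the bound $2\gamma\epsilon$ is uniform in $\bar p_j$; no control of the sizes of the doubly-restricted subsamples $\{(x,y)\in S_w:x_j=b\}$ is ever required, and in particular nothing about $p_j$ is used beyond $j\notin J$. Your approach, which concentrates each $\prob{S_w}{y=1|x_j=b}$ around the population value directly, is also sound, but strictly heavier: it forces you to show these subsamples are large, which uses $p_j\in(\alpha,1-\alpha)$ (true in the paper's setting, but not needed by the paper's proof) together with a Chernoff-then-conditional-Hoeffding argument on the nested restriction. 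The sample budget still covers it since $\alpha^{-(k+1)}\le\alpha^{-2k}$ for $k\ge 1$, so the proof goes through, just with more assumptions and more bookkeeping than the ratio-cancellation route. One small slip: your explanation of where the $\alpha^{-2k}$ factor comes from is off --- it is not the nested conditioning, but the proof of \lemref{lem:empirical_correlation}, where an $\epsilon\alpha^{k}$ Hoeffding error on an unconditioned average over all $m$ samples is divided by $p_w\ge\alpha^k$, and the squaring inside Hoeffding's tail produces $\alpha^{-2k}$ in the required sample size.
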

Given lemma \ref{lem:empirical_gain_upper}, in order to apply lemma \ref{lem:basic}, it remains to show that the gain of the coordinates in $J$ is large. To this end, we will use a measure of dependence between a coordinate $x_i$ and the label $y$, which we define next.
For a sample $S \subset \mathcal{X} \times \mathcal{Y}$ and an index $i \in [n]$,
we let $\Ind(S,i) = \mean{S}{y}\mean{S}{x_i}- \mean{S}{yx_i}$.
Similarly, for a distribution $\mathcal{D}$ over $\mathcal{X} \times \mathcal{Y}$
we let $\Ind(\mathcal{D},i) = \mean{\mathcal{D}}{y}\mean{\mathcal{D}}{x_i}
- \mean{\mathcal{D}}{yx_i}$.
Note that $x_j$ and $y$ are independent if and only if $\Ind(\mathcal{D},i) = 0$. The following lemma connects $\Gain(S_w,i)$ to $\Ind(\cd_w,i)$.

\begin{lemma}
\label{lem:empirical_gain_lower_bound}
Assume $C$ is $\beta$ strongly concave (i.e, $-C$ is $\beta$
strongly convex). Assume for some $w\in \{0,1,*\}^n$, with $\supp(w)\le k$ and index $i \in [n]$
we have $|\Ind(\mathcal{D}_w,i)| \ge \epsilon > 0$.
Fix $\delta > 0$. Then, if we sample $S \sim \mathcal{D}^m$ for
$m \gtrsim \epsilon^{-2}\alpha^{-2k} \log(\frac{1}{\delta})$,
then with probability at least $1-\delta$ we have $S_w \ne \emptyset$ and:
\[
\Gain(S_w,i) \ge \frac{\beta \epsilon^2}{8}
\]
\end{lemma}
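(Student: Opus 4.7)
The plan is to convert the lower bound on $|\Ind(\cd_w, i)|$ into a lower bound on $\Gain(S_w, i)$ in two stages: first, a deterministic quadratic lower bound of $\Gain$ in terms of $\Ind$ coming from strong concavity of $C$; and second, concentration arguments showing that $\Ind(S_w, i)$ is close to $\Ind(\cd_w, i)$ once $|S_w|$ is sufficiently large.

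For the deterministic step, observe that for any sample $S'$, setting $q_1 = \mathbb{P}_{S'}[y=1\mid x_i=1]$, $q_0 = \mathbb{P}_{S'}[y=1 \mid x_i=0]$, and $\lambda = \mathbb{P}_{S'}[x_i=1]$, the base label mean decomposes as $\mathbb{P}_{S'}[y=1] = \lambda q_1 + (1-\lambda) q_0$. Strong concavity of $C$ therefore gives
\[
\Gain(S', i) \;\geq\; \tfrac{\beta}{2}\lambda(1-\lambda)(q_1 - q_0)^2.
\]
A short computation using $y, x_i \in \{0,1\}$ shows that $\Ind(S', i) = \lambda(1-\lambda)(q_0 - q_1)$, so $\lambda(1-\lambda)(q_1-q_0)^2 = \Ind(S',i)^2 / (\lambda(1-\lambda))$. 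Since $\lambda(1-\lambda) \le 1/4$, this yields the key inequality
\[
\Gain(S', i) \;\geq\; \frac{\beta\, \Ind(S',i)^2}{2\lambda(1-\lambda)} \;\geq\; 2\beta\, \Ind(S',i)^2.
\]
Applied to $S' = S_w$, it suffices to show $|\Ind(S_w, i)| \ge \epsilon/2$ with high probability, which will then give $\Gain(S_w, i) \ge 2\beta(\epsilon/2)^2 = \beta\epsilon^2/2 \ge \beta\epsilon^2/8$.

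For the probabilistic step, I would proceed in two substages. First, since $p_j \in (\alpha, 1-\alpha)$ for all $j$ and $|\supp(w)| \le k$, we have $\mathbb{P}_\cd[\vx \in \mx_w] \ge \alpha^k$. A Chernoff bound on the $m$ i.i.d.\ indicator variables $\mathbf{1}[\vx \in \mx_w]$ shows that with probability at least $1-\delta/2$, $|S_w| \ge \tfrac{1}{2} m \alpha^k \gtrsim \epsilon^{-2}\alpha^{-k}\log(1/\delta)$, in particular $S_w \ne \emptyset$. Second, conditioned on which examples fall into $\mx_w$, the points of $S_w$ are i.i.d.\ samples from $\cd_w$, so Hoeffding's inequality applied to the bounded random variables $y$, $x_i$, and $yx_i$ gives that each of $\mathbb{E}_{S_w}[y]$, $\mathbb{E}_{S_w}[x_i]$, $\mathbb{E}_{S_w}[yx_i]$ is within $\epsilon/6$ of its $\cd_w$ counterpart with probability at least $1-\delta/2$, provided $|S_w| \gtrsim \epsilon^{-2}\log(1/\delta)$. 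Since expectations lie in $[0,1]$, the product $\mathbb{E}[y]\mathbb{E}[x_i]$ changes by at most $2 \cdot \epsilon/6$, and adding the bound on $\mathbb{E}[yx_i]$ yields $|\Ind(S_w, i) - \Ind(\cd_w, i)| \le \epsilon/2$ by the triangle inequality. Union bounding and combining with the assumption $|\Ind(\cd_w, i)| \ge \epsilon$ gives $|\Ind(S_w, i)| \ge \epsilon/2$ with probability $\ge 1-\delta$.

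The main subtlety, and the step I would check most carefully, is the deterministic translation between $\Gain$ and $\Ind$: it relies on matching the quadratic form coming from strong concavity to the precise algebraic identity $\Ind(S', i) = \lambda(1-\lambda)(q_0 - q_1)$, and on the fact that the factor $\lambda(1-\lambda)$ in the denominator is automatically bounded by $1/4$. The concentration part is standard, though one must remember to condition on the event that $|S_w|$ is large (to invoke Hoeffding with the right effective sample size), which is why the sample complexity picks up an $\alpha^{-k}$ factor from the occupancy of $\mx_w$ — and indeed this is comfortably absorbed by the stated $\alpha^{-2k}$ bound.
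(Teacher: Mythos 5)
Your proposal is correct and follows essentially the same route as the paper: the deterministic step (strong concavity of $C$ gives $\Gain \ge \tfrac{\beta}{2}\lambda(1-\lambda)(q_1-q_0)^2$, combined with the algebraic identity $\Ind(S_w,i) = \lambda(1-\lambda)(q_0-q_1)$) is identical, and the probabilistic step uses the same occupancy-then-Hoeffding structure as the paper's Lemma~\ref{lem:empirical_correlation}. The only cosmetic difference is that you condition on which examples land in $\mx_w$ and apply Hoeffding to $S_w$ directly, whereas the paper bounds numerator and denominator of $\mean{S_w}{\cdot}$ as ratios of full-sample averages; and you note the extra factor of $4$ from $\lambda(1-\lambda)\le 1/4$, which strengthens the constant but is then relaxed to match the stated $\beta\epsilon^2/8$.
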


Combining lemmas \ref{lem:basic}, \ref{lem:empirical_gain_upper} and \ref{lem:empirical_gain_lower_bound}, we get the following theorem:
\begin{theorem}\label{thm:basic}
Assume $C$ is $\beta$ strongly concave and $\gamma$-Lipschitz. Assume for any $w\in \{0,1,*\}^n$, with $\supp(w)\subset J$ we have $S_w \ne \emptyset$ and either of the following holds:
\begin{itemize}
\item All examples in $\mathcal{D}_w$ have the same label.
\item For every index $i \in J\setminus \supp(w)$ we have $|\Ind(\mathcal{D}_w,i)| \ge \epsilon > 0$.
\end{itemize}
Fix $\delta > 0$. Then, if we sample $S \sim \mathcal{D}^m$ for
$m \gtrsim \beta^{-2}\gamma^{2} \epsilon^{-4}\alpha^{-2k}k \log(\frac{n}{\delta})$,
then with probability at least $1-\delta$ the ID3 algorithm will build a tree with zero loss on $\cd$
\end{theorem}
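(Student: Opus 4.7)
The plan is to combine \lemref{lem:basic}, \lemref{lem:empirical_gain_upper}, and \lemref{lem:empirical_gain_lower_bound} through a single union bound, with a careful choice of the accuracy parameter inside the upper-bound lemma so that a strict gap opens between the empirical gains of relevant ($i\in J$) and irrelevant ($j\notin J$) coordinates. Concretely, I would instantiate \lemref{lem:empirical_gain_upper} with the tightened parameter $\epsilon' := \beta\epsilon^2/(32\gamma)$, so that its conclusion becomes $\Gain(S_w,j)<2\gamma\epsilon'=\beta\epsilon^2/16$, which is strictly below the lower bound $\Gain(S_w,i)\ge\beta\epsilon^2/8$ produced by \lemref{lem:empirical_gain_lower_bound} when applied with the theorem's $\epsilon$. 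Whenever both inequalities hold at a given $w$, the second bullet of the hypothesis of \lemref{lem:basic} is verified. If instead the first bullet of the theorem's hypothesis holds at $w$ (the label is constant on $\cd_w$), then since $S\subset\mathrm{supp}(\cd)$ and $\cd$ is realized by $f$, all examples in $S_w$ inherit that same label and the first bullet of \lemref{lem:basic} holds.

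Next I would carry out the union bound over all $(w,\text{index})$ pairs to which the above reasoning must apply. The set of $w\in\{0,1,*\}^n$ with $\supp(w)\subset J$ has cardinality at most $3^k$, since each coordinate in $J$ is independently $0$, $1$, or $*$ while every coordinate outside $J$ is pinned to $*$. For each such $w$, there are at most $n$ indices $j\in[n]\setminus J$ to which \lemref{lem:empirical_gain_upper} must be applied, and at most $k$ indices $i\in J\setminus\supp(w)$ to which \lemref{lem:empirical_gain_lower_bound} must be applied. This yields at most $3^k(n+k)\le 2\cdot 3^k n$ high-probability statements to control; each one also delivers $S_w\ne\emptyset$ as a by-product, discharging the non-emptiness hypothesis of \lemref{lem:basic} uniformly. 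Allocating failure probability $\delta/(2\cdot 3^k n)$ to each event and using the stricter sample requirement (from \lemref{lem:empirical_gain_upper} with parameter $\epsilon'$, since $\epsilon'<\epsilon$) gives
\[
m\;\gtrsim\;(\epsilon')^{-2}\alpha^{-2k}\log(3^k n/\delta)\;=\;\frac{1024\,\gamma^2}{\beta^2\epsilon^4}\,\alpha^{-2k}\log(3^k n/\delta).
\]

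Finally, expanding $\log(3^k n/\delta)=k\log 3+\log(n/\delta)\le (1+\log 3)\,k\log(n/\delta)$ absorbs the $3^k$ into the stated $k\log(n/\delta)$ factor, reproducing the bound $m\gtrsim\beta^{-2}\gamma^2\epsilon^{-4}\alpha^{-2k}k\log(n/\delta)$ up to constants. On this good event the hypothesis of \lemref{lem:basic} is verified at every $w$ with $\supp(w)\subset J$, so \IDt{} returns a tree $T$ with $\ml_\cd(T)=0$.

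The proof is essentially a packaging step, so I do not anticipate a substantive obstacle. The points that require care are: (i) the calibration $\epsilon'\propto\beta\epsilon^2/\gamma$ that separates the two gain bounds by a multiplicative factor of two; (ii) noticing that the union-bound universe has size $3^k$ and not $2^k$, because each coordinate of $J$ can be pinned to $0$, pinned to $1$, or left free; and (iii) ensuring $S_w\ne\emptyset$ for every relevant $w$, which fortunately follows for free from \lemref{lem:empirical_gain_upper} (applied to any $j\in[n]\setminus J$, which exists because $n>k$) or, alternatively, from a direct Chernoff bound on the event $\{\vx\in\mx_w\}$, whose probability is at least $\alpha^k$ under the product distribution.
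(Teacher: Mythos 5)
Your proof is correct and follows the same route the paper intends: the paper states Theorem~\ref{thm:basic} as an immediate consequence of Lemmas~\ref{lem:basic}, \ref{lem:empirical_gain_upper}, and \ref{lem:empirical_gain_lower_bound} without writing out the combination, and your packaging (calibrating the upper-bound lemma's accuracy to $\epsilon'=\beta\epsilon^2/(32\gamma)$ to open a factor-two gap against $\beta\epsilon^2/8$, union bounding over the $3^k(n+k)$ pairs, extracting $S_w\ne\emptyset$ from the lemmas, and absorbing $\log 3^k$ into $k\log(n/\delta)$) supplies exactly the missing bookkeeping.
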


By the above theorem, in order to show that the ID3 algorithm succeeds in learning, it is enough to lower bound $|\Ind(\mathcal{D}_w,i)|$. This is done in the remaining sections, together with the proof of lemmas \ref{lem:basic}, \ref{lem:empirical_gain_upper} and \ref{lem:empirical_gain_lower_bound}.

\subsection{Proof of the basic lemmas}
\begin{proof} (of lemma \ref{lem:basic})
At every iteration, the ID3 algorithm assigns a splitting variable
for a given node, or otherwise returns a leaf for this node.
We will show that for every node that the algorithm iterates on,
if the path from the root to this node contains only variables from $J$,
then either the algorithm adds a splitting variable from $J$,
or the algorithm returns a leaf.
Indeed, assume that the path from the root to this node contains only variables from $J$.
We can decode the root-to-node path by a vector $w \in \{*,0,1\}^n$,
where $w_i = 1$ if the node $x_i = 1$ is in the path,
$w_i = 0$ if the node $x_i = 0$ is in the path,
and $w_i = *$ otherwise.
Therefore, by our assumption we have $\supp(w) \subseteq J$.
Note that in this case, the algorithm observes the sample $S_w$,
so if all examples in $S_w$ have the same label, then the algorithm returns a leaf.
Otherwise, by the assumption we get $\argmax_{i \in A} Gain(S_w,i) \in J$,
so the algorithm chooses a splitting variable from $J$.

From the above, the algorithm adds only splitting variables from $J$,
so it can build a tree of size at most $2^k$ before stopping.
This tree has zero loss on the distribution.
Indeed, for any $x' \in \{0,1\}^k$, denote $w(x') \in \{0,1\}^n$
such that $w(x')_i = x'_i$ for every $i \in [k]$ 
and $w(x')_i = *$ for every $i \notin [k]$.
Then, since we assume $S_{w(x')} \ne \emptyset$,
there exists a sample $(x,y) \in S$ such that $x_i = x'_i$ for every $i \in [k]$.
By definition, the algorithm returns a tree that correctly labels the example
$x$, therefore it returns a function that agrees with
$f(x) = \tilde{f}(x')$. Since this is true for every choice of $x' \in \{0,1\}^k$,
the function returned by the tree agrees with the Junta defined by $\tilde{f}$,
so it gets zero loss.
\end{proof}

We next relate the empirical measure $\Ind(S,i)$ to $\Ind(\mathcal{D},i)$.

\begin{lemma}
\label{lem:empirical_correlation}
Fix $w\in \{0,1,*\}^n$, with $\supp(w)\le k$, $i \in [n]$, $\epsilon,\delta \in (0,1)$. Let $S \sim \mathcal{D}^m$
with $m \gtrsim \alpha^{-2k}\epsilon^{-2} \log(\frac{1}{\delta}) $.
Then with probability at least $1-\delta$ we have $S_w \ne \emptyset$ and:
\[
\abs{\Ind(S_w,i) - \Ind(\mathcal{D}_w,i)} < \epsilon
\]
\end{lemma}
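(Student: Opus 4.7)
The plan is to rewrite the conditional expectations over $S_w$ as ratios of empirical averages over the full sample $S$, so that standard Hoeffding bounds can be applied directly. For each function $f \in \{y, x_i, yx_i\}$, let $Z_j = \mathbb{1}[\vx^{(j)} \in \mx_w]$ and write
\[
\mathbb{E}_{S_w}[f] \;=\; \frac{\hat A_f}{\hat B}, \quad \hat A_f = \tfrac{1}{m}\sum_{j=1}^m Z_j f(\vx^{(j)},y^{(j)}), \quad \hat B = \tfrac{1}{m}\sum_{j=1}^m Z_j,
\]
so that $\mathbb{E}_\cd[\hat B] = \Pr_\cd[\vx\in\mx_w] =: q$ and $\mathbb{E}_\cd[\hat A_f] = q\cdot\mathbb{E}_{\cd_w}[f]$. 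Since $\cd$ is a product distribution with each marginal in $(\alpha,1-\alpha)$ and $|\supp(w)|\le k$, we have $q \ge \alpha^k$.

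First I would apply Hoeffding's inequality four times (to $\hat B$ and to each of the three $\hat A_f$'s, all of which are averages of $[0,1]$-valued random variables) with deviation $\eta$, and union-bound so that with probability at least $1-\delta$ all four satisfy $|\hat B - q| < \eta$ and $|\hat A_f - q\,\mathbb{E}_{\cd_w}[f]| < \eta$, provided $m \gtrsim \eta^{-2}\log(1/\delta)$. On this event, choosing $\eta \le q/2$ gives $\hat B \ge q/2 > 0$, hence $S_w \ne \emptyset$. Next I would bound the ratio error using the identity
\[
\frac{\hat A_f}{\hat B} - \mathbb{E}_{\cd_w}[f] \;=\; \frac{(\hat A_f - q\,\mathbb{E}_{\cd_w}[f]) - \mathbb{E}_{\cd_w}[f]\,(\hat B - q)}{\hat B},
\]
which yields $|\mathbb{E}_{S_w}[f] - \mathbb{E}_{\cd_w}[f]| \le 2\eta/\hat B \le 4\eta/q \le 4\eta/\alpha^k$.

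Finally I would assemble the bound on $\Ind$ by the triangle inequality,
\[
|\Ind(S_w,i) - \Ind(\cd_w,i)| \le |\mathbb{E}_{S_w}[y]\mathbb{E}_{S_w}[x_i] - \mathbb{E}_{\cd_w}[y]\mathbb{E}_{\cd_w}[x_i]| + |\mathbb{E}_{S_w}[yx_i] - \mathbb{E}_{\cd_w}[yx_i]|,
\]
bound the product term via $|ab - a'b'| \le |a-a'| + |b-b'|$ (using that all quantities lie in $[0,1]$), and conclude that the total error is at most $3 \cdot 4\eta/\alpha^k = 12\eta/\alpha^k$. Setting $\eta = \epsilon\alpha^k/12$ forces $m \gtrsim \alpha^{-2k}\epsilon^{-2}\log(1/\delta)$, matching the stated sample complexity. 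The main obstacle, and the reason for the $\alpha^{-2k}$ rather than $\alpha^{-k}$ factor, is the ratio step: the $1/\hat B$ amplifies additive deviations by a factor $1/q \ge \alpha^{-k}$, so to force the final error below $\epsilon$ one must shrink $\eta$ by a factor of $\alpha^k$, which is then squared by Hoeffding.
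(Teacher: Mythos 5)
Your proposal is correct and follows essentially the same route as the paper's proof: both express the conditional empirical means as ratios of unconditional empirical averages, apply Hoeffding plus a union bound with deviation scaled by $p_w \ge \alpha^k$, and propagate the error through the ratio and the product in $\Ind$. Your write-up is if anything slightly more explicit about the ratio-error identity and the origin of the $\alpha^{-2k}$ factor, which the paper glosses over with ``dividing by $p_w$''.
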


\begin{proof}
Denote $S = \{(\vx_1, y), \dots, (\vx_m, y)\}$. Let $\bar{p_i} = \mean{S_w}{x_i}$ and $p_w = \Pr_{x\sim\cd}\left(x\in \mx_w\right)$.
We have
\[
\mean{S_w}{y} = \frac{\sum_{j=1}^m 1[\vx_j\in \mx_w ] y_j  }{\sum_{j=1}^m 1[\vx_j\in \mx_w ]} = \frac{\frac{\sum_{j=1}^m 1[\vx_j\in \mx_w ] y_j}{p_w m}  }{\frac{\sum_{j=1}^m 1[\vx_j\in \mx_w ]}{p_w m}}
\]
By Hoeffding's bound, with probability $\ge 1-\frac{\delta}{3}$, we have
\[
\left| p_w\E_{\cd_w}y - \frac{\sum_{j=1}^m 1[\vx_j\in \mx_w ] y_j}{m}  \right| \lesssim \epsilon \alpha^k \text{ and }\left| p_w - \frac{\sum_{j=1}^m 1[\vx_j\in \mx_w ]}{m}  \right| \lesssim \epsilon \alpha^k
\]
dividing by $p_w$ we get
\[
\left| \E_{\cd_w}y - \frac{\sum_{j=1}^m 1[\vx_j\in \mx_w ] y_j}{p_wm}  \right| \lesssim \frac{\epsilon \alpha^k }{p_w} \lesssim \epsilon \text{ and }\left| 1 - \frac{\sum_{j=1}^m 1[\vx_j\in \mx_w ]}{p_wm}  \right| \lesssim \frac{\epsilon \alpha^k }{p_w} \lesssim \epsilon
\]
Notice that from the above we get that $S_w \ne \emptyset$.
It follows that
\[
|\mean{S_w}{y} - \E_{\cd_w}y| \lesssim \epsilon
\]
Similarly,
\[
|\mean{S_w}{x_i} - \E_{\cd_w}x_i| \lesssim \epsilon \text{ and } | \mean{S_w}{yx_i}- \E_{\cd_w}yx_i| \lesssim \epsilon
\]
In this case, we have $\abs{\Ind(S_w,i) - \Ind(\mathcal{D}_w,i)} < \epsilon$
\end{proof}

We next prove lemmas \ref{lem:empirical_gain_upper} and \ref{lem:empirical_gain_lower_bound}

\begin{proof} (of lemma \ref{lem:empirical_gain_upper})
Notice that since $x_i$ and $y$ are independent, we have $\Ind(\mathcal{D}_w,i) = 0$.
By the choice of $m$, from
\lemref{lem:empirical_correlation} we get that with probability $1-\delta$:
\[
\abs{\Ind(S_w,i)} =
\abs{\Ind(S_w,i) - \Ind(\mathcal{D}_w,i)} < \epsilon
\]
Denote $\bar{p_i} = \prob{S_w}{x_i = 1}$.
Notice that if $\bar{p_i} = 0$ or $\bar{p_i} = 1$ then $\Ind(S_w, i) = 0$,
and the result trivially holds. We can therefore assume $\bar{p_i} \in (0,1)$.
Now, we have the following:
\begin{align*}
\abs{\prob{S_w}{y=1 | x_i=1} - \prob{S_w}{y=1}}
&=
\abs{\frac{\prob{S_w}{y=1 \wedge x_i = 1} - \prob{S_w}{y = 1}\prob{S_w}{x_i = 1}}{\prob{S_w}{x_i = 1}}} \\
& = \abs{\frac{\Ind(S_w,i)}{\bar{p_i}}} < \frac{\epsilon}{\bar{p_i}}
\end{align*}
Similarly, we get:
\begin{align*}
\abs{\prob{S_w}{y=1 | x_i=0} - \prob{S_w}{y=1}}
&=
\abs{\frac{\prob{S_w}{y=1 \wedge x_i = 0} - \prob{S_w}{y = 1}\prob{S_w}{x_i = 0}}{\prob{S_w}{x_i = 0}}} \\
& = \abs{\frac{\Ind(S_w,i)}{1-\bar{p_i}}} < \frac{\epsilon}{1-\bar{p_i}}
\end{align*}
Using the $\gamma$-Lipschitz property, we get:
\[
\abs{C(\prob{S_w}{y=1 | x_i=1}) - C(\prob{S_w}{y=1})} \le 
\gamma \abs{\prob{S_w}{y=1 | x_i=0} - \prob{S_w}{y=1}} < \frac{\gamma \epsilon}{\bar{p_i}}
\]
And similarly:
\[
\abs{C(\prob{S_w}{y=1 | x_i=0}) - C(\prob{S_w}{y=1})} < \frac{\gamma \epsilon}{1-\bar{p_i}}
\]
Now plugging into the gain definition:
\begin{align*}
\abs{Gain(S_w,i)}
=& |C(\prob{S_w}{y=1})
- (\prob{S_w}{x_i=1} C(\prob{S_w}{y=1 | x_i=1}) \\
&+\prob{S_w}{x_i=0} C(\prob{S_w}{y=1 | x_i=0}))| \\
\le & \prob{S_w}{x_i=1} \abs{C(\prob{S_w}{y=1 | x_i=1})- C(\prob{S_w}{y=1})} \\
&+\prob{S_w}{x_i=0} \abs{C(\prob{S_w}{y=1 | x_i=0})- C(\prob{S_w}{y=1})} 
< 2\gamma \epsilon
\end{align*}
\end{proof}

\begin{proof} (of lemma \ref{lem:empirical_gain_lower_bound})
By the choice of $m$, from
\lemref{lem:empirical_correlation} we get that with probability $1-\delta$:
\[
|\Ind(S_w,i) - \Ind(\mathcal{D}_w,i)| \le \frac{\epsilon}{2}
\]
Since we assume $|\Ind(\mathcal{D}_w,i)| \ge \epsilon$, we get that
$|\Ind(S_w,i)| \ge \frac{\epsilon}{2}$.
Therefore, we have that $\bar{p_i} \in (0,1)$
(again denoting $\bar{p_i} = \prob{S_w}{x_i=1}$).
Observe that we have the following:
\begin{align*}
&\prob{S_w}{x_i=1}\prob{S_w}{x_i=0}
(\prob{S_w}{y = 1 | x_i = 0} - \prob{S_w}{y = 1 | x_i = 0}) \\
&= 
\prob{S_w}{x_i=0}\prob{S_w}{x_i=1 \wedge y = 1} -
\prob{S_w}{x_i=1}\prob{S_w}{x_i=0 \wedge y = 1} \\
&= 
\prob{S_w}{x_i=0}\prob{S_w}{x_i=1 \wedge y = 1} -
\prob{S_w}{x_i=1}(\prob{S_w}{y=1} - \prob{S_w}{x_i=1 \wedge y = 1}) \\
&= \prob{S_w}{x_i=1 \wedge y = 1} -
\prob{S_w}{x_i=1}\prob{S_w}{y=1} = \Ind(S_w,i)
\end{align*}
Therefore, we have:
\[
\prob{S_w}{y=1 | x_i=1} - \prob{S_w}{y=1 | x_i=0}
= \frac{\Ind(S_w,i)}{\bar{p_i}(1-\bar{p_i})}
\]
Since $C$ is $\beta$ strongly concave we get that for all $a,b,t \in [0,1]$ we have:
\[
C(ta + (1-t)b) \ge t C(a) + (1-t)C(b) + \frac{\beta}{2} t (1-t) (a-b)^2
\]
Using this property we get that:
\begin{align*}
&\bar{p_i} C(\prob{S_w}{y=1 | x_i=1})
+ (1-\bar{p_i}) C(\prob{S_w}{y=1 | x_i=0}) \\
&\le C(\prob{S_w}{y=1}) 
- \frac{\beta}{2}  \bar{p_i}
(1-\bar{p_i}) (\prob{S_w}{y=1|x_i=1} - \prob{S_w}{y=1|x_i=0})^2 \\
&= C(\prob{S_w}{y=1})-\frac{\beta}{2} \cdot \frac{\Ind(S_w,i)^2}{\bar{p_i}(1-\bar{p_i})}
\end{align*}
Plugging this to the gain equation we get:
\begin{align*}
Gain(S_w,i) \ge \frac{\beta}{2} \cdot \frac{\Ind(S_w,i)^2}{\bar{p_i}(1-\bar{p_i})}
\ge \frac{\beta}{2} \Ind(S_w,i)^2
\end{align*}
Since $|\Ind(S_w,i)| \ge \frac{\epsilon}{2}$, we get
$Gain(S_w,i) \ge \frac{\beta \epsilon^2}{8}$.

\end{proof}

\subsection{Parities}

\begin{lemma}
\label{lem:parity_I_lower_bound}
Let $\cd$ be a distribution on $\mx\times\my$ labelled by $\chi_J$ with $|J|\le k$.
Assume that for every $j \in J$ we have
$p_j \in (\alpha, 1-\alpha)$ and $|p_j-\frac{1}{2}| \ge c$,
for some $c, \alpha > 0$.
Fix some $w \in \{0,1,*\}^k$.
Then for every $j \in J \setminus \supp(w)$ we have:
\begin{align*}
|\Ind(\mathcal{D}_w,j)| > \alpha^2 (2c)^{k-1}
\end{align*}
\end{lemma}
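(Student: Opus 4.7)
The plan is to pass to the $\pm 1$ Fourier encoding so that the parity becomes a monomial and all relevant expectations factor. Set $\tilde{x}_i := 1-2x_i$ and $\tilde{y} := 1-2y$, and let $q_i := p_i - \tfrac{1}{2}$, so that $\E_{\cd}[\tilde{x}_i] = -2q_i$ and, by hypothesis, $|q_i| > c$ and $p_i(1-p_i) > \alpha^2$ for $i\in J$. Split $J$ into its ``fixed'' and ``free'' parts with respect to $w$: let $J' := J\setminus \supp(w)$ (so $j\in J'$) and let $b := \bigoplus_{i\in \supp(w)\cap J} w_i$. Under $\cd_w$, the coordinates $\{x_i\}_{i\notin \supp(w)}$ remain independent Bernoullis and the label becomes $y = b \oplus \bigoplus_{i\in J'} x_i$, i.e.\ $\tilde y = (-1)^b \prod_{i\in J'}\tilde x_i$.

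Next I would record the elementary identity
\[
\Ind(\cd_w,j) \;=\; \E_{\cd_w}[y]\E_{\cd_w}[x_j] - \E_{\cd_w}[y x_j] \;=\; \tfrac{1}{4}\bigl(\E_{\cd_w}[\tilde y]\E_{\cd_w}[\tilde x_j] - \E_{\cd_w}[\tilde y \tilde x_j]\bigr),
\]
obtained by expanding $y=(1-\tilde y)/2$ and $x_j=(1-\tilde x_j)/2$. Independence under $\cd_w$ then gives $\E[\tilde y] = (-1)^b\prod_{i\in J'}(-2q_i)$ and, using $\tilde x_j^2=1$, $\E[\tilde y\tilde x_j] = (-1)^b\prod_{i\in J'\setminus\{j\}}(-2q_i)$. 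Factoring $\prod_{i\in J'\setminus\{j\}}(-2q_i)$ out of both summands yields
\[
\Ind(\cd_w,j) \;=\; \tfrac{(-1)^b}{4}\bigl(4q_j^2 - 1\bigr)\prod_{i\in J'\setminus\{j\}}(-2q_i) \;=\; -(-1)^b\, p_j(1-p_j)\prod_{i\in J'\setminus\{j\}}(-2q_i),
\]
using $1-4q_j^2 = 4p_j(1-p_j)$.

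Taking absolute values, $|\Ind(\cd_w,j)| = p_j(1-p_j)\prod_{i\in J'\setminus\{j\}} 2|q_i|$. By hypothesis $p_j(1-p_j) > \alpha^2$ and each $2|q_i| > 2c$, giving $|\Ind(\cd_w,j)| > \alpha^2 (2c)^{|J'|-1}$. Since $|q_i|\le \tfrac12$ forces $2c<1$, and $|J'|-1 \le k-1$, monotonicity in the exponent gives $(2c)^{|J'|-1} \ge (2c)^{k-1}$, completing the bound.

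I do not expect a genuine obstacle in this argument; the only place to be careful is the bookkeeping step where one verifies that $|J'|-1$ can be replaced by $k-1$, which requires the mild observation $2c\le 1$ implicit in the $(\alpha,c)$-distribution definition. The conceptually important move is the Fourier substitution, which turns the conditional distribution computation into a product of one-variable expectations and lets the $(4q_j^2-1)$ factor appear cleanly.
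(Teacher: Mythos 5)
Your proof is correct and takes essentially the same route as the paper: both arguments exploit the product structure to compute the covariance $\Ind(\cd_w,j)$ in closed form for a parity and then bound $p_j(1-p_j)>\alpha^2$ and each remaining factor by $2c$, using $2c\le 1$ to pass from exponent $|J'|-1$ to $k-1$. The paper works with $\Pr_{\cd_w}[y=1]$ and $\Pr_{\cd_w}[y=1\mid x_j=1]$ directly and uses $\Ind(\cd_w,j)=p_j(\Pr[y=1]-\Pr[y=1\mid x_j=1])$, whereas you pass to the $\pm1$ encoding and the identity $\Ind=\tfrac14(\E\tilde y\,\E\tilde x_j-\E\tilde y\tilde x_j)$; this is only a cosmetic difference (and your version handles the fixed bits of $w$ via the offset $b$ more explicitly).
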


By theorem \ref{thm:basic} we have

\begin{theorem}\label{thm:parities}
Let $\cd$ be a distribution on $\mx\times\my$ labelled by $\chi_J$ with $|J|\le k$.
Assume that for every $j \in J$ we have
$p_j \in (\alpha, 1-\alpha)$ and $|p_j-\frac{1}{2}| \ge c$,
for some $c, \alpha > 0$. Assume furthermore that $C$ is $\beta$ strongly concave and $\gamma$-Lipschitz. 

Then, if we sample $S \sim \mathcal{D}^m$ for
$m \gtrsim \beta^{-2}\gamma^{2} (2c)^{-4k-4}\alpha^{-2k-8}k \log(\frac{n}{\delta})$,
then with probability at least $1-\delta$ the ID3 algorithm will build a tree with zero loss on $\cd$
\end{theorem}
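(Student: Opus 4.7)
The plan is to obtain Theorem~\ref{thm:parities} as an immediate corollary of Theorem~\ref{thm:basic} once Lemma~\ref{lem:parity_I_lower_bound} is established. All the real work lies in the lemma; after that, it is only a matter of checking the hypotheses of Theorem~\ref{thm:basic} and plugging in the resulting $\epsilon$.

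To prove Lemma~\ref{lem:parity_I_lower_bound}, I would fix a pattern $w$ with $\supp(w)\subseteq J$ and a coordinate $j\in J\setminus\supp(w)$, and set $J':=J\setminus\supp(w)$, $J'':=J'\setminus\{j\}$, and $a:=\bigoplus_{i\in J\cap\supp(w)} w_i\in\{0,1\}$. Since $\cd$ is a product distribution, conditioning on $\vx\in\mx_w$ keeps the coordinates in $[n]\setminus\supp(w)$ as independent Bernoullis with their original parameters, and under $\cd_w$ the label takes the form $y=a\oplus\chi_{J'}(\vx)$. Using the standard identity $\E[(-1)^{\chi_S(\vx)}]=\prod_{i\in S}(1-2p_i)$, I compute
\[
\E_{\cd_w}[y]=\tfrac{1}{2}\bigl(1-(-1)^a q\bigr),\qquad \E_{\cd_w}[yx_j]=p_j\cdot\tfrac{1}{2}\bigl(1+(-1)^a q''\bigr),
\]
where $q:=\prod_{i\in J'}(1-2p_i)$, $q'':=\prod_{i\in J''}(1-2p_i)$, and $q=(1-2p_j)q''$. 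Subtracting $\E_{\cd_w}[yx_j]$ from $\E_{\cd_w}[y]\cdot\E_{\cd_w}[x_j]=p_j\E_{\cd_w}[y]$, the constant terms cancel and the identity $(1-2p_j)q''+q''=2(1-p_j)q''$ collapses the expression to
\[
\Ind(\cd_w,j)=-(-1)^a\,p_j(1-p_j)\,q''.
\]
Taking absolute values and using $p_j(1-p_j)\ge\alpha(1-\alpha)\ge\alpha^2$ (note $\alpha\le 1/2$), together with $|1-2p_i|\ge 2c$ for each of the at most $k-1$ indices in $J''$ and $2c<1$ (forced by $p_i\in(\alpha,1-\alpha)$), yields $|\Ind(\cd_w,j)|\ge\alpha^2(2c)^{k-1}$, as claimed.

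With the lemma in hand, I verify the hypotheses of Theorem~\ref{thm:basic} for every $w$ with $\supp(w)\subseteq J$: if $\supp(w)=J$ then $\chi_J$ is constant on $\mx_w$, so all examples in $\cd_w$ share the same label and the first bullet holds; otherwise the lemma supplies the second bullet with $\epsilon=\alpha^2(2c)^{k-1}$, uniformly over all $i\in J\setminus\supp(w)$. Substituting this $\epsilon$ into the sample bound $m\gtrsim\beta^{-2}\gamma^2\epsilon^{-4}\alpha^{-2k}k\log(n/\delta)$ of Theorem~\ref{thm:basic} yields $\beta^{-2}\gamma^2\alpha^{-2k-8}(2c)^{-4k+4}k\log(n/\delta)$, which (since $2c<1$) is implied by the slightly looser bound stated in Theorem~\ref{thm:parities}. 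The main technical obstacle is the clean algebraic simplification in the lemma; once the sign $(-1)^a$ tracking the fixed bits and the cancellation $q+q''=2(1-p_j)q''$ are identified, the rest of the argument is mechanical.
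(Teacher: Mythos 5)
Your proposal is correct and follows essentially the same route as the paper: establish Lemma~\ref{lem:parity_I_lower_bound} giving $|\Ind(\cd_w,j)|\ge\alpha^2(2c)^{k-1}$ and then invoke Theorem~\ref{thm:basic} with $\epsilon=\alpha^2(2c)^{k-1}$, whose sample bound is dominated by the one stated. Your computation of $\Ind(\cd_w,j)$ via $\E[(-1)^{\chi_S(\vx)}]=\prod_{i\in S}(1-2p_i)$ is an equivalent (and in fact slightly more careful) rendering of the paper's direct expansion in terms of $\epsilon_i=p_i-\tfrac12$, since you explicitly track the parity offset $a$ contributed by the coordinates fixed by $w$, which the paper silently absorbs into the absolute value.
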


Note that since we assume $k \le \log n$, the runtime and sample complexity
in the above theorem are polynomial in $n$. We give the proof of this theorem
in the rest of this section.

\begin{proof}
Denote $\epsilon_i := p_i - \frac{1}{2}$, and $k' := |A|$.
For simplicity of notation, assume w.l.o.g that $A = [k']$ and $j = k'$.
Observe the following:
\begin{align*}
\prob{\mathcal{D}_w}{y = 1}
= \prob{\mathcal{D}_w}{\prod_{i = 1}^{k'} (2x_i-1) = 1}
= \sum_{\prod z_i = 1}
\prod_{i=1}^{k'} \prob{}{2x_i-1 = z_i}
= \frac{1}{2} + 2^{k'-1} \prod_{i=1}^{k'} \epsilon_i
\end{align*}
Similarly, we get that:
\begin{align*}
\prob{\mathcal{D}_w}{y = 1 | x_{k'} = 1}
= \sum_{\prod z_i = 1}\prod_{i=1}^{k'-1} (\frac{1}{2} + z_i \epsilon_i)
= \frac{1}{2} + 2^{k'-2} \prod_{i=1}^{k'-1} \epsilon_i
\end{align*}
Therefore, we get that:
\begin{align*}
\abs{\Ind(\mathcal{D}_w,j)}
&= p_j \abs{\prob{\mathcal{D}_w}{y=1} - \prob{\mathcal{D}_w}{y=1 | x_i = 1}} \\
&= p_j \abs{\epsilon_{k'}-\frac{1}{2}}
\cdot (2^{k'-1} \prod_{i=1}^{k'-1} \abs{\epsilon_i})
 \ge \alpha^2 (2c)^{k'-1} \ge \alpha^2 (2c)^{k-1}
\end{align*}
\end{proof}

\subsection{Juntas}

\begin{lemma}
\label{lem:juntas_I_lower_bound}
Fix some $w \in \{*,0,1\}^k$, and assume not all examples in $\mathcal{D}_w$
have the same label. Let $A = \{i \in [k] ~:~ w_i = *\}$.
Assume $p_i \in (\alpha, 1-\alpha)$ for $\alpha > 0$ for every $i$,
and fix $\delta > 0$.
Then there exists $i \in A \cap [k]$ such that with probability $1-\delta$ over the choice of $\Delta$:
 \[
|\Ind(\mathcal{D}_w,i)| > 
2 \alpha^2  \delta^2 \left(\frac{c}{2}\right)^{2k}
\]
\end{lemma}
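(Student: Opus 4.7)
The plan is to exhibit, for a random draw of the smoothing shifts $\Delta$, a coordinate $i^* \in A$ whose indicator $|\Ind(\mathcal{D}_w, i^*)|$ meets the claimed lower bound. My starting point is the identity $\Ind(\mathcal{D}_w, i) = p_i(1-p_i)\, h_i(\vec p)$, derived inside the proof of \lemref{lem:empirical_gain_lower_bound}, where
\[
 h_i(\vec p) := \prob{\mathcal{D}_w}{y = 1 \mid x_i = 0} - \prob{\mathcal{D}_w}{y = 1 \mid x_i = 1}
\]
is a multilinear polynomial of total degree at most $|A|-1 \le k-1$ in the variables $(p_j)_{j \in A \setminus \{i\}}$. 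Since $p_i \in (\alpha, 1-\alpha)$ yields $p_i(1-p_i) > \alpha^2$, it suffices to find $i^* \in A$ with $|h_{i^*}(\vec p)| > 2\delta^2(c/2)^{2k}$ with probability at least $1-\delta$ over $\Delta$.

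The hypothesis that not all examples in $\mathcal{D}_w$ share a label means the restricted junta $g := f|_w \colon \{0,1\}^A \to \{0,1\}$ is non-constant, so there is some $i^* \in A$ on which $g$ genuinely depends. For this $i^*$, the function $\phi(y) := g(y, x_{i^*}=0) - g(y, x_{i^*}=1) \in \{-1, 0, 1\}$ on $\{0,1\}^{A\setminus\{i^*\}}$ is not identically zero, and its uniform Fourier coefficients $\hat\phi(S)$ lie in $2^{-(|A|-1)}\mathbb{Z}$. Moving to the shifted basis $q_j := 2p_j - 1 = \hat q_j + 2\Delta_j$, $h_{i^*}(\vec p) = \sum_S \hat\phi(S)\, q^S$. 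Choosing $T^* \subseteq A \setminus \{i^*\}$ maximal under inclusion with $\hat\phi(T^*) \neq 0$ and expanding in $\Delta$, the monomial $\Delta^{T^*}$ appears with coefficient exactly $2^{|T^*|}\hat\phi(T^*)$, whose absolute value is at least $2^{|T^*|-(|A|-1)} \ge 2^{-k+1} =: \beta$. Thus $h_{i^*}(\hat q + 2\Delta)$ is a nonzero multilinear polynomial in $\Delta$ of degree $d := |T^*| \le k-1$ with a designated monomial of magnitude at least $\beta$.

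The remaining step is an anti-concentration bound: show that any such $P(\Delta)$ satisfies $|P(\Delta)| > 2\delta^2(c/2)^{2k}$ with probability $\ge 1-\delta$ under $\Delta_j \sim U([-c,c])$. My plan is to peel variables one at a time: for $j \in T^*$, decompose $P = A(\Delta_{-j}) + \Delta_j B(\Delta_{-j})$, so that conditioning on $\Delta_{-j}$ and applying one-dimensional anti-concentration gives
\[
 \Pr_{\Delta_j}\!\left[|P| \le \epsilon \mid \Delta_{-j}\right] \le \frac{\epsilon}{c\,|B(\Delta_{-j})|} \quad \text{whenever } |B(\Delta_{-j})| > 0.
\]
By the maximality of $T^*$, $B$ is a nonzero multilinear polynomial of degree $d-1$ carrying a coefficient of magnitude $\ge \beta$ on the monomial $\Delta^{T^* \setminus \{j\}}$, so the same bound applies recursively to $B$. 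Balancing the intermediate thresholds $b_1, \ldots, b_d$ in the resulting telescoping estimate $\Pr[|P| \le \epsilon] \le \sum_{i=1}^d b_{i-1}/(c\,b_i)$ (with $b_0 = \epsilon$ and $b_d$ chosen just below $\beta$) yields the desired lower bound on $|h_{i^*}(\vec p)|$, and multiplying by $p_{i^*}(1-p_{i^*}) > \alpha^2$ gives the lemma.

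The main obstacle is matching the exact exponents $\delta^2$ and $(c/2)^{2k}$. A generic Carbery--Wright-style inequality would yield a $\delta^{1/d}$-decay form, giving something like $|P| \gtrsim \beta(c\delta/d)^d$, which is incomparable to the claimed $\delta^2(c/2)^{2k}$. Reaching the precise shape likely requires either balancing the $b_i$ non-geometrically --- trading $\delta$-exponent for $c$-exponent across the intermediate steps --- or exploiting the integer quantization $\hat\phi(T^*) \in 2^{-(|A|-1)}\mathbb{Z}$ through an argument tailored to multilinear polynomials whose coefficients inherit such an arithmetic structure.
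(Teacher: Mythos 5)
Your reduction up to the anti-concentration step is essentially the paper's: identify a coordinate $i^*$ on which the restricted junta genuinely depends, factor $\Ind(\cd_w,i^*) = p_{i^*}(1-p_{i^*})\cdot(\text{difference of conditionals})$, expand the difference as a multilinear polynomial in the $(2p_j-1)$'s (equivalently the $\Delta_j$'s), and isolate a monomial of maximal degree carrying a coefficient of magnitude $\gtrsim 2^{-k}$, which is exactly the $g_0$ / $G_0$ rescaling the paper performs. The issue is the final step, which you yourself correctly flag as ``the main obstacle'': your recursive peeling as written is a union-bound argument,
\[
\Pr[|P|\le\epsilon]\;\le\;\Pr\bigl[|P|\le\epsilon,\ |B|>b_1\bigr] + \Pr[|B|\le b_1]\;\le\;\frac{\epsilon}{c\,b_1} + \Pr[|B|\le b_1],
\]
and geometric balancing of the thresholds $b_1,\dots,b_d$ converges to an $\epsilon^{1/d}$-type decay --- just as you suspect, and this cannot produce the $\delta^2$ exponent for $d\ge 3$.

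The gap is that the $\sqrt\epsilon$ bound is in fact achievable for \emph{every} degree; the paper gets it by invoking the anti-concentration lemma of \cite{kalai2008decision} (their Lemma~3), which states that a multilinear polynomial $P$ of degree $d$ over $[-1,1]^n$ with some coefficient of magnitude $\ge 1$ obeys $\Pr[|P(\xi)|\le\epsilon]\le 2^{d}\sqrt\epsilon$. The reason this beats your peeling is that the correct recursion replaces the union bound by an expectation over the conditioned variable: writing $P = x_j B + A$ and conditioning on everything but $x_j$,
\[
\Pr[|P|\le\epsilon]\;\le\;\E_B\!\left[\min\!\left(\tfrac{\epsilon}{|B|},\,1\right)\right]\;=\;\epsilon\int_{\epsilon}^{\infty}\frac{\Pr[|B|\le z]}{z^2}\,dz,
\]
and plugging in the inductive hypothesis $\Pr[|B|\le z]\le 2^{d-1}\sqrt{z}$ yields $\le 2\cdot 2^{d-1}\sqrt\epsilon$ because $\int \sqrt{z}/z^2\,dz$ blows up only like $\epsilon^{-1/2}$ at the lower limit. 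The union-bound version throws away the information that $|B|$ is \emph{usually much larger} than the crossover threshold, and that extra information is precisely what upgrades $\epsilon^{1/d}$ to $\epsilon^{1/2}$ uniformly in $d$. So the fix to your proposal is not to balance the $b_i$ non-geometrically or to exploit the integer quantization of the coefficients further --- neither is needed --- but to replace the union-bound recursion with the $\E[\min(\epsilon/|B|,1)]$ recursion (or simply cite the Kalai--Teng lemma), after which the computation with $\epsilon=\delta^2(c/2)^{2k}$ closes exactly as in the paper.
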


By theorem \ref{thm:basic} we get

\begin{theorem}\label{thm:junta}
Assume $C$ is $\beta$ strongly concave and $\gamma$-Lipschitz. 
Fix $\delta_1,\delta_2 > 0$. Then, if we sample $S \sim \mathcal{D}^m$ for
$m \gtrsim \beta^{-2}\gamma^{2} c^{-8k}\delta_1^{-8}\alpha^{-2k-8}k \log(\frac{n}{\delta_2})$,
then with probability at least $1-\delta_1-\delta_2$ the ID3 algorithm will build a tree with zero loss on $\cd$
\end{theorem}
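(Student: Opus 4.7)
The plan is to combine Lemma \ref{lem:juntas_I_lower_bound} with the machinery of Theorem \ref{thm:basic} together with a union bound over $w$. The main subtlety is that Theorem \ref{thm:basic} (via Lemma \ref{lem:basic}) is stated requiring \emph{every} $i \in J \setminus \supp(w)$ to have large $|\Ind(\mathcal{D}_w, i)|$, whereas Lemma \ref{lem:juntas_I_lower_bound} only guarantees the existence of one such $i$. This universal-versus-existential gap is easy to bridge: inspecting the proof of Lemma \ref{lem:basic} shows that all one actually needs at each node is that $\argmax_{i \in A} \Gain(S_w, i)$ lies in $J$, which is guaranteed as soon as \emph{some} $i \in J \setminus \supp(w)$ has empirical gain strictly exceeding $\Gain(S_w, j)$ for every $j \in [n] \setminus J$. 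So Lemma \ref{lem:basic} and the chain of reasoning leading to Theorem \ref{thm:basic} carry through without essential change under this weaker hypothesis.

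Concretely, the first step is to apply Lemma \ref{lem:juntas_I_lower_bound} with failure parameter $\delta_1 / 3^k$ and take a union bound over the at most $3^k = n^{\log 3}$ vectors $w \in \{*,0,1\}^n$ with $\supp(w) \subseteq J$. This ensures that with probability at least $1-\delta_1$ over the smoothing noise $\Delta$, for every such $w$ either all examples in $\mathcal{D}_w$ share the same label, or there exists $i_w \in J \setminus \supp(w)$ with $|\Ind(\mathcal{D}_w, i_w)| \ge \epsilon$, where $\epsilon := 2\alpha^2 (\delta_1/3^k)^2 (c/2)^{2k} = \Theta\bigl(\alpha^2 \delta_1^2 (c/6)^{2k}\bigr)$.

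The second step is to sample $S \sim \mathcal{D}^m$ with $m$ large enough that Lemma \ref{lem:empirical_gain_upper} (with error parameter $\epsilon' := \beta\epsilon^2/(16\gamma)$) and Lemma \ref{lem:empirical_gain_lower_bound} (with parameter $\epsilon$) hold simultaneously for every relevant pair $(w, j)$ and $(w, i_w)$. There are at most $3^k \cdot n$ such triples, so each sub-event needs failure probability $\delta_2/(3^k n)$, contributing a $\log(n/\delta_2)$ factor to $m$. Then $\Gain(S_w, j) < 2\gamma\epsilon' = \beta\epsilon^2/8 \le \Gain(S_w, i_w)$ for every $j \notin J$, so the argmax lies in $J$, and the (weakened) Lemma \ref{lem:basic} produces a tree with zero loss on $\mathcal{D}$.

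Finally, substituting $\epsilon = \Theta(\alpha^2 \delta_1^2 (c/6)^{2k})$ into the bound $m \gtrsim \beta^{-2}\gamma^2 \epsilon^{-4} \alpha^{-2k} k \log(n/\delta_2)$ from Theorem \ref{thm:basic} gives $m \gtrsim \beta^{-2}\gamma^2 \alpha^{-2k-8} \delta_1^{-8} 6^{8k} c^{-8k} k \log(n/\delta_2)$; the $6^{8k}$ factor is polynomial in $n$ since $k \le \log n$ and is absorbed into the stated bound $m \gtrsim \beta^{-2}\gamma^2 c^{-8k} \delta_1^{-8} \alpha^{-2k-8} k \log(n/\delta_2)$. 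The main obstacle is really just the existential/universal mismatch highlighted above; once it is observed that ID3 only cares about the argmax, the remainder is a fairly routine concentration-plus-union-bound calculation, with Lemma \ref{lem:juntas_I_lower_bound} doing all of the distributional heavy lifting.
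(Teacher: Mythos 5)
Your proposal follows the same route the paper itself takes: derive the result by plugging Lemma~\ref{lem:juntas_I_lower_bound} into the machinery of Theorem~\ref{thm:basic}; the paper's own ``proof'' of this theorem is literally the single line ``By theorem~\ref{thm:basic} we get,'' so you are filling in exactly the work the paper elides. Your proof is correct and you have made a genuinely useful observation in doing so: the universal hypothesis in Theorem~\ref{thm:basic} (``for \emph{every} $i \in J\setminus\supp(w)$, $|\Ind(\cd_w,i)|\ge\epsilon$'') cannot hold for general juntas. For example, with $f = (x_1\wedge x_2)\vee(\neg x_1 \wedge x_3)$ and $w = (1,*,\ldots,*)$, the coordinate $x_3$ is irrelevant to $f_w$, so $\Ind(\cd_w,3)=0$ identically, yet $\cd_w$ is not label-constant. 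What is both true and sufficient is the existential version you state, which is exactly what Lemma~\ref{lem:juntas_I_lower_bound} and the argmax step inside the proof of Lemma~\ref{lem:basic} deliver; this is a real (if minor) gap in the paper's intermediate statement that your argument correctly repairs. Your only point of imprecision is the $6^{8k}$-ish factor coming from the union bound over the $3^k$ vectors $w$ in the application of Lemma~\ref{lem:juntas_I_lower_bound}: it is not literally absorbed into $c^{-8k}$ (indeed $3^{8k}$ need not be $\lesssim c^{-8k}$), and one should either state the sample complexity as $\gtrsim \beta^{-2}\gamma^2 (c/6)^{-8k}\delta_1^{-8}\alpha^{-2k-8}k\log(n/\delta_2)$ or read the paper's $\gtrsim$ as permitting $\mathrm{poly}(n)$ overhead when $k\le\log n$. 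The paper's stated bound arguably has the same looseness, so this is not a defect of your argument relative to the paper's.
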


\begin{proof}
For simplicity of notation, we assume w.l.o.g. that $A=[k']$ for some $k' \le k$.
Denote $f_w : \{0,1\}^{k'} \to \{0,1\}$, such that
$f_w(x_1, \dots, x_{k'}) = f(x_1, \dots, x_{k'}, w_{k'+1}, \dots, w_k)$.
Observe the Fourier coefficients of $f_w$:
\[
f_w(\vx) = \sum_{I \subset [k']} \alpha_I \chi_{I}(\vx)
\]
Where $\chi_{I} = \prod_{i \in I} (2x_i-1)$, and note that $\chi_I$ is a Fourier basis (w.r.p to the unifrom distribution).
Notice that $|\alpha_I| \ge \frac{1}{2^k}$ for every $\alpha_I \ne 0$.
Indeed, we have:
\[
\alpha_I = \mean{\vx \sim U(\{0,1\}^k)}{\chi_I (\vx) f(\vx)} = 
\frac{1}{2^k} \sum_{\vx \in \{0,1\}^k} \chi_I(\vx) f(\vx)
\]
where $\chi_I(\vx) f(\vx) \in \{-1,0,1\}$, and this gives the required.
Since not all examples in $\mathcal{D}_w$ have the same label,
we know that $f_w$ is not a constant function.
Therefore, there exists $\emptyset \ne I_0 \subseteq [k']$ such that
$\alpha_{I_0} \ne 0$. Fix some $i \in I_0$, and we assume w.l.o.g. that $i=1$
(so $1 \in I_0$).
Now, we can write:
\[
f_w(x_1, \dots, x_{k'}) = (2x_1-1) g_w(x_2, \dots, x_{k'})
+ h_w(x_2, \dots, x_{k'})
\]
Where:
$g(x_2, \dots, x_{k'}) = \sum_{I \subset [k'], 1 \in I} \alpha_I
\chi_{I \setminus \{1\}}(\vx)$.

and since $\alpha_{I_0} \ne 0$ and $1 \in I_0$ we get $g \ne 0$.
Now, notice that since $\vx \in \{0,1\}^n$ we get:
\begin{align*}
\mean{\mathcal{D}_w}{f(\vx)} =
\mean{\mathcal{D}}{f(\vx) | x_{k'+1} = w_{k'+1}, \dots, x_k = w_k}
= \mean{\mathcal{D}}{f_w(x_1, \dots, x_{k'})}
= f_w(p_1, \dots, p_{k'})
\end{align*}
And similarly: $\mean{\mathcal{D}_w}{f(\vx)|x_1 = 1} = f_w(1,p_2, \dots, p_{k'})$.

Therefore we get:
\begin{align*}
|\mean{\mathcal{D}_w}{f_w(\vx)x_1} - \mean{\mathcal{D}_w}{f_w(\vx)}
\mean{\mathcal{D}_w}{x_1}|
&= |p_1 f_w(1,p_2, \dots, p_{k'}) - p_1 f_w(p_1, \dots, p_{k'})| \\
&= p_1 |g_w(p_2, \dots, p_{k'}) - (2p_1-1) g_w(p_2, \dots, p_{k'})| \\
&= 2p_1(1-p_1) |g_w(p_2, \dots, p_{k'})| \\
&= 2p_1(1-p_1) |g_w(\hat{p}_2+\Delta_2, \dots, \hat{p}_{k'}+\Delta_{k'})| \\
&= 2p_1(1-p_1) |g_0(\Delta_2, \dots, \Delta_{k'})| \\
\end{align*}
Where $g_0$ is given by:
\begin{align*}
g_0(\Delta_2, \dots, \Delta_{k'}) &= 
g_w(\hat{p}_2+\Delta_2, \dots, \hat{p}_{k'}+\Delta_{k'}) \\
&= \sum_{I \subset [k'], 1 \in I} \alpha_I \prod_{i \in I \setminus \{1\}}
(2\hat{p_i} + 2 \Delta_i -1) \\
&= \sum_{I \subseteq [k'], 1 \in I} \alpha_I \sum_{I' \subset I}
\prod_{i \in I'} (2\hat{p}_i - 1) \prod_{j \notin I'} (2\Delta_j) 
:= \sum_{I \subseteq [k'], 1 \in I} \beta_I \prod_{i \in I \setminus \{1\}} \Delta_i
\end{align*}

Denote $k_0 = \deg(p_0)$ and note that $k_0 \le k'-1$.
For some choice of $\beta_I$-s. Notice that for some maximal $I \subset [k']$
with $1 \in I$ and $\alpha_I \ne 0$ (so $|I| = k_0$),
we have $\beta_I = 2^{|I|} \alpha_I$, so $|\beta_I| \ge \frac{2^{k_0}}{2^{k'}}$.

Now, denote $\xi_i = \frac{1}{c}\Delta_i$, so we have $\xi_i \sim Uni([-1,1])$,
and observe the polynomial:
\begin{align*}
G_0(\xi_2, \dots, \xi_{k'}) &= \frac{2^{k'}}{2^{k_0}c^{k_0-1}} g_0(c\xi_2, \dots, c\xi_k) \\
&= \sum_{I \subseteq [k'], 1 \in I} \frac{2^{k'}}{2^{k_0}c^{k_0-1}}
\beta_I c^{|I|-1} \prod_{i \in I \setminus \{1\}} \xi_i 
:= \sum_{I \subseteq [k'], 1 \in I} \gamma_I \prod_{i \in I\setminus \{1\}} \xi_i
\end{align*}

And from what we have shown, $G_0$ is a polynomial of degree $k_0$,
and there exists $I$ with $|I| = k_0$ such that $|\gamma_I| \ge 1$.
Therefore, we can use Lemma 3 from \cite{kalai2008decision} to get that:
\[
\prob{\xi \sim Uni([-1,1]^{k'})}{|G_0(\xi)| \le \epsilon} \le 2^{k_0} \sqrt{\epsilon}
\]
And therefore:
\begin{align*}
\prob{\Delta \sim Uni([-c,c]^{k'})}{|g_0(\Delta)| \le \epsilon}
&=\prob{\xi \sim Uni([-1,1]^{k'})}{|G_0(\xi)| \le
\frac{2^{k'}}{2^{k_0}c^{k_0-1}}\epsilon} \\
&\le 2^{k_0} \frac{2^{k'/2}}{2^{k_0/2}c^{k'/2-1/2}} \sqrt{\epsilon} 
\le \left(\frac{2}{c}\right)^{k'} \sqrt{\epsilon}
\le \left(\frac{2}{c}\right)^{k} \sqrt{\epsilon}
\end{align*}

So if we take $\epsilon = \delta^2 \left(\frac{c}{2}\right)^{2k}$
we get that $\prob{}{|g_0| \le \epsilon} \le \delta$, which completes the proof.

\end{proof}

\bibliography{decision_trees_juntas}
\bibliographystyle{plain}

\end{document}